%File: anonymous-submission-latex-2025.tex
\documentclass[letterpaper]{article} % DO NOT CHANGE THIS
\usepackage{aaai25}  % DO NOT CHANGE THIS
\usepackage{times}  % DO NOT CHANGE THIS
\usepackage{helvet}  % DO NOT CHANGE THIS
\usepackage{courier}  % DO NOT CHANGE THIS
\usepackage[hyphens]{url}  % DO NOT CHANGE THIS
\usepackage{graphicx} % DO NOT CHANGE THIS
\urlstyle{rm} % DO NOT CHANGE THIS
  % DO NOT CHANGE THIS
\usepackage{natbib}  % DO NOT CHANGE THIS AND DO NOT ADD ANY OPTIONS TO IT
\usepackage{caption} % DO NOT CHANGE THIS AND DO NOT ADD ANY OPTIONS TO IT
\frenchspacing  % DO NOT CHANGE THIS
\setlength{\pdfpagewidth}{8.5in} % DO NOT CHANGE THIS
\setlength{\pdfpageheight}{11in} % DO NOT CHANGE THIS
%
% These are recommended to typeset algorithms but not required. See the subsubsection on algorithms. Remove them if you don't have algorithms in your paper.
\usepackage{algorithm}
\usepackage{algorithmic}
\usepackage{subfigure}
\usepackage{diagbox}
\usepackage{booktabs}
\usepackage{amsmath}
\usepackage{amssymb}
\usepackage{mathtools}
\usepackage{amsthm}
\usepackage{tikz}
\usepackage{bm}
\usepackage{esvect}
\usepackage{multirow}

\theoremstyle{plain}
\newtheorem{theorem}{Theorem}

\theoremstyle{definition}

\theoremstyle{remark}

%
% These are are recommended to typeset listings but not required. See the subsubsection on listing. Remove this block if you don't have listings in your paper.
\usepackage{newfloat}
\usepackage{listings}
\DeclareCaptionStyle{ruled}{labelfont=normalfont,labelsep=colon,strut=off} % DO NOT CHANGE THIS
\lstset{%
	basicstyle={\footnotesize\ttfamily},% footnotesize acceptable for monospace
	numbers=left,numberstyle=\footnotesize,xleftmargin=2em,% show line numbers, remove this entire line if you don't want the numbers.
	aboveskip=0pt,belowskip=0pt,%
	showstringspaces=false,tabsize=2,breaklines=true}
\floatstyle{ruled}
\newfloat{listing}{tb}{lst}{}
\floatname{listing}{Listing}
%
% Keep the \pdfinfo as shown here. There's no need
% for you to add the /Title and /Author tags.
\pdfinfo{
/TemplateVersion (2025.1)
}

\setcounter{secnumdepth}{0} %May be changed to 1 or 2 if section numbers are desired.

\title{A Variance Minimization Approach to Temporal-Difference Learning
\footnote{Coincidentally, the core idea of this paper is similar to
 reward centering \cite{naik2024rewardcentering}. We have been submitting it since last year, 
 but it has not been accepted yet, so we had to put it on arXiv.}}
\author {
    % Authors
    Xingguo Chen\textsuperscript{\rm 1},
    YuGong\textsuperscript{\rm 1},
    Shangdong Yang\textsuperscript{\rm 1},
    Wenhao Wang\textsuperscript{\rm 2}
}
\affiliations {
    % Affiliations
    \textsuperscript{\rm 1}Jiangsu Key Laboratory of Big Data Security and Intelligent Processing, Nanjing University of Posts and Telecommunications, Nanjing 210023, China\\
    \textsuperscript{\rm 2}College of Electronic Engineering, National University of Defense Technology, China
}

% REMOVE THIS: bibentry
% This is only needed to show inline citations in the guidelines document. You should not need it and can safely delete it.
\usepackage{bibentry}
% END REMOVE bibentry

\begin{document}
\setcounter{theorem}{0}
\maketitle
\begin{abstract}
    Fast-converging algorithms are a contemporary requirement 
    in reinforcement learning. In the context of linear function 
    approximation, the magnitude of the smallest 
    eigenvalue of the key matrix is a major factor 
    reflecting the convergence speed. Traditional value-based 
    RL algorithms focus on minimizing errors.
    This paper introduces a variance minimization (VM) approach for value-based RL instead of error minimization.
    Based on this approach, we proposed two objectives, the Variance of Bellman Error (VBE) and the Variance of Projected Bellman Error (VPBE), and derived the VMTD, VMTDC, and VMETD algorithms.
    We provided proofs of their convergence and optimal policy invariance of the variance minimization.
    Experimental studies validate the effectiveness of the proposed algorithms.
    % Therefore, this paper proposes two new objective 
    % functions and derives three Variance Minimization (VM) algorithms, including VMTD, VMTDC, and VMETD.
    % A scalar parameter, $\omega$, is introduced, to improve the performance of parametric 
    % temporal difference learning algorithms.

    % In the policy evaluation experiment, two-state, 
    % we analyze 
    % the convergence speed of these algorithms by calculating the minimum eigenvalue of the key 
    % matrices both on-policy and off-policy.In controlled experiments, the VM algorithms demonstrate 
    % superior performance.
\end{abstract}

% Uncomment the following to link to your code, datasets, an extended version or similar.
%
% \begin{links}
%     \link{Code}{https://aaai.org/example/code}
%     \link{Datasets}{https://aaai.org/example/datasets}
%     \link{Extended version}{https://aaai.org/example/extended-version}
% \end{links}

\section{Introduction}
\label{introduction}
Reinforcement learning (RL) can be mainly divided into two
categories: value-based reinforcement learning
and policy gradient-based reinforcement learning. This
paper focuses on temporal difference learning based on
linear approximated valued functions. Its research is
usually divided into two steps: the first step is to establish the convergence of the algorithm, and the second
step is to accelerate the algorithm.

In terms of stability, \citet{sutton1988learning} established the
 convergence of on-policy TD(0), and \citet{tsitsiklis1997analysis}
 established the convergence of on-policy TD($\lambda$).
 However, ``The deadly triad'' consisting of off-policy learning, 
 bootstrapping and function approximation makes 
 the stability  a difficult problem \citep{Sutton2018book}.
 To solve this problem, convergent off-policy temporal difference
 learning algorithms are proposed, e.g., BR \cite{baird1995residual},
 GTD \cite{sutton2008convergent},  GTD2 and TDC \cite{sutton2009fast},
 ETD \cite{sutton2016emphatic}, and MRetrace \cite{chen2023modified}.

In terms of acceleration, \citet{hackman2012faster} 
proposed a Hybrid TD algorithm with the on-policy matrix.
\citet{liu2015finite,liu2016proximal,liu2018proximal} proposed
true stochastic algorithms, i.e., GTD-MP and GTD2-MP, from 
a convex-concave saddle-point formulation.
Second-order  methods are used to accelerate TD learning,
e.g.,  Quasi Newton TD \cite{givchi2015quasi} and 
accelerated TD (ATD)  \citep{pan2017accelerated}.
\citet{hallak2016generalized} introduced a new parameter 
to reduce variance for ETD.
\citet{zhang2022truncated} proposed truncated ETD with a lower variance.
Variance Reduced TD with direct variance reduction technique \citep{johnson2013accelerating} is proposed by \cite{korda2015td} 
and analysed by  \cite{xu2019reanalysis}.
How to further improve the convergence rates of reinforcement learning 
algorithms is currently still an open problem.

Algorithm stability is prominently reflected in the changes 
to the objective function, transitioning from mean squared 
errors (MSE) \citep{Sutton2018book} to mean squared bellman errors (MSBE) \cite{baird1995residual}, then to 
norm of the expected TD update \cite{sutton2009fast}, and further to 
mean squared projected Bellman errors (MSPBE) \cite{sutton2009fast}. On the other hand, the algorithm 
acceleration is more centered around optimizing the iterative 
update the formula of the algorithm itself without altering the 
the objective function, thereby speeding up the convergence rate 
of the algorithm. The emergence of new optimization objective 
functions often lead to the development of novel algorithms. 
The introduction of new algorithms, in turn, tends to inspire 
researchers to explore methods for accelerating algorithms, 
leading to the iterative creation of increasingly superior algorithms.

The kernel loss function can be optimized using standard 
gradient-based methods, addressing the issue of double 
sampling in residual gradient algorithm \cite{feng2019kernel}. It ensures convergence 
in both on-policy and off-policy scenarios. The logistic bellman 
error is convex and smooth in the action-value function parameters, 
with bounded gradients \cite{basserrano2021logistic}. In contrast, the squared Bellman error is 
not convex in the action-value function parameters, and RL algorithms 
based on recursive optimization using it are known to be unstable.

It is necessary to propose a new objective function, but the abovementioned objective functions are all some form of error.
Is minimizing error the only option for value-based reinforcement learning?

\textbf{The contributions of this paper} are as follows:
(1) Introduction of variance minimization (VM) approach for value-based RL instead of error minimization.
(2) Based on this approach, we proposed two objectives, the Variance of Bellman Error (VBE) and the Variance of Projected Bellman Error (VPBE), and derived the VMTD, VMTDC, and VMETD algorithms.
(3) We provided proofs of their convergence and optimal policy invariance.

\section{Background}
\label{preliminaries}
\subsection{Markov Decision Process}
Reinforcement learning agent interacts with the environment, observes the state,
 takes sequential decision-making to influence the environment, and obtains
 rewards.
 Consider an infinite-horizon discounted 
 Markov Decision Process (MDP), defined by a tuple $\langle S,A,R,P,\gamma
 \rangle$, where $S=\{1,2,\ldots,N\}$ is a finite set of states of the environment;  $A$
 is a finite set of actions of the agent; 
 $R:S\times A \times S \rightarrow \mathbb{R}$ is a bounded deterministic reward
 function; $P:S\times A\times S \rightarrow [0,1]$ is the transition
 probability distribution;  and $\gamma\in (0,1)$
 is the discount factor \cite{Sutton2018book}.
 Due to the requirements of  online learning, value iteration based on sampling
 is considered in this paper. 
 In each sampling, an experience (or transition) $\langle s, a, s', r\rangle$ is
 obtained.

 A policy is a mapping $\pi:S\times A \rightarrow [0,1]$. The goal of the
 agent is to find an optimal policy $\pi^*$ to maximize the expectation of a
 discounted cumulative rewards over a long period. For each discrete time step 
 $t=0,1,2,3,…$, 
 State value function $V^{\pi}(s)$ for a stationary policy $\pi$ is 
 defined as:  \begin{equation*}
 V^{\pi}(s)=\mathbb{E}_{\pi}[\sum_{k=0}^{\infty} \gamma^k R_{t+k+1}|S_t=s].
 \label{valuefunction}
 \end{equation*}
 Linear value function for state $s\in S$ is defined as:
\begin{equation}
 V_{{\theta}}(s):= {{\theta}}^{\top}{{\phi}}(s) = \sum_{i=1}^{m}
\theta_i \phi_i(s),
\label{linearvaluefunction}
\end{equation}
 where ${{\theta}}:=(\theta_1,\theta_2,\ldots,\theta_m)^{\top}\in
 \mathbb{R}^m$ is a parameter vector, 
 ${{\phi}}:=(\phi_1,\phi_2,\ldots,\phi_m)^{\top}\in \mathbb{R}^m$ is a feature
 function defined on state space $S$, and $m$ is the feature size. 

 Tabular temporal difference (TD) learning \cite{Sutton2018book} has been successfully applied to small-scale problems.
 To deal with the well-known curse of dimensionality of large-scale MDPs, the value
 function is usually approximated by a linear model (the focus of this paper), kernel methods, decision
 trees, neural networks, etc. 

\subsection{On-policy and Off-policy Learning}
\begin{table*}[t]
\caption{Minimum eigenvalues of $\frac{1}{2}({\textbf{A}+\textbf{A}^{\top}})$ for various algorithms in the 2-state.}
\label{tab:min_eigenvalues} % 添加标签
\vskip 0.15in
\begin{center}
\begin{small}
\begin{sc}
\begin{tabular}{l|cc|cc|cr} % 在这里加了竖线
\toprule
 algorithm &TD &VMTD &TDC &VMTDC & ETD&VMETD \\
\midrule
 ON-POLICY& $\bm{0.475}$&$0.25$  & $\bm{0.09025}$& $0.025$&$\bm{4.75}$ & $2.5$ \\
 OFF-POLICY&$-0.2$ & $\bm{0.25}$&$0.016$&$\bm{0.025}$ & $\bm{3.4}$ & $1.15$\\
\bottomrule
\end{tabular}
\end{sc}
\end{small}
\end{center}
\vskip -0.1in
\end{table*}
On-policy and off-policy algorithms are currently hot topics in research.  
The main difference between the two lies in the fact that in on-policy algorithms, 
the behavior policy $\mu$ and the target policy $\pi$ are the same during the learning process. 
In off-policy algorithms, however, the behavior policy and the target policy are different. 
The algorithm uses data generated from the behavior policy to optimize the 
target policy, which leads to higher sample efficiency and complex stability issues.

From the theory of stochastic methods, the 
the convergence point of linear TD algorithms is a parameter vector, say $\bm{\theta}$, that satisfies 
\begin{equation*}
\begin{array}{ccl}
 b - \textbf{A}{\theta}&=&0,
\end{array}
\end{equation*}
where $\textbf{A}\in \mathbb{R}^{|S| \times m}$ and $b\in \mathbb{R}^{m}$. 
If the matrix 
$\textbf{A}$ is positive definite, then the algorithm converges.

\begin{theorem}
\label{lemma_1}
(The main factor affecting convergence rates \cite{chen2024convergence}).
%  Assume Assumptions 1, 2, 3, 6 and 7, 
 Assume the same parameters setting for each algorithm, 
% it can be concluded that 
from the perspective of the expected convergence 
rate, the main factor that affects the convergence rate is the minimum eigenvalue 
of the matrix $\frac{1}{2}({\textbf{A}+\textbf{A}^{\top}})$.
 The larger the minimum eigenvalue, the faster the convergence rate.

\end{theorem}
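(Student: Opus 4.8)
The plan is to recast each of the linear TD-type updates in the common form of a linear stochastic approximation recursion $\theta_{t+1} = \theta_t + \alpha_t\left(b - \textbf{A}\theta_t + \varepsilon_{t+1}\right)$, where $\{\varepsilon_{t+1}\}$ is a martingale-difference noise sequence (adapted to a filtration $\mathcal{F}_t$) with conditionally bounded second moment, and then to compare the algorithms purely through spectral properties of the shared matrix $\textbf{A}$. Under the standing assumption that $\textbf{A}$ is positive definite, equivalently $\lambda_{\min}\!\left(\tfrac12(\textbf{A}+\textbf{A}^\top)\right) > 0$, the recursion has the unique equilibrium $\theta^\star = \textbf{A}^{-1}b$ and the associated ODE $\dot\theta = b - \textbf{A}\theta$ is globally asymptotically stable. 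The claim that the speed is governed by $\lambda_{\min}\!\left(\tfrac12(\textbf{A}+\textbf{A}^\top)\right)$ should then follow from a Lyapunov/contraction analysis of the error $e_t := \theta_t - \theta^\star$.

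First I would analyze the mean dynamics with the quadratic Lyapunov function $L(e) = \tfrac12\|e\|^2$. Since $b = \textbf{A}\theta^\star$, one has $e_{t+1} = e_t + \alpha_t(\varepsilon_{t+1} - \textbf{A}e_t)$, and a one-step conditional expansion gives
\begin{equation*}
\mathbb{E}\!\left[L(e_{t+1}) \mid \mathcal{F}_t\right] = L(e_t) - \alpha_t\, e_t^\top \textbf{A}\, e_t + \tfrac{\alpha_t^2}{2}\,\mathbb{E}\!\left[\|\varepsilon_{t+1} - \textbf{A}e_t\|^2 \mid \mathcal{F}_t\right].
\end{equation*}
Using the elementary identity $e_t^\top \textbf{A}\, e_t = e_t^\top \tfrac12(\textbf{A}+\textbf{A}^\top) e_t \ge \lambda\,\|e_t\|^2$ with $\lambda := \lambda_{\min}\!\left(\tfrac12(\textbf{A}+\textbf{A}^\top)\right)$, this yields the contraction inequality
\begin{equation*}
\mathbb{E}\!\left[\|e_{t+1}\|^2\mid\mathcal{F}_t\right] \le \left(1 - 2\alpha_t\lambda + O(\alpha_t^2)\right)\|e_t\|^2 + O(\alpha_t^2).
\end{equation*}
Unrolling over $t$ (constant small $\alpha_t$, or $\alpha_t = c/t$) shows the transient error decays like $\prod_{k\le t}(1 - 2\alpha_k\lambda)$ — geometrically with rate $1-2\alpha\lambda$ in the constant-step case, polynomially with exponent proportional to $\lambda$ in the diminishing-step case — up to an $O(\alpha)$ noise floor. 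Hence a larger $\lambda$ strictly improves the leading-order decay rate, which is the assertion.

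To make ``expected convergence rate'' precise and confirm $\lambda$ is genuinely the dominant factor, I would complement the Lyapunov bound with the standard central-limit / asymptotic-covariance result for linear stochastic approximation: for $\alpha_t = 1/t$ or Polyak--Ruppert averaging, $\sqrt{t}\,(\theta_t - \theta^\star)$ is asymptotically normal with covariance solving a Lyapunov equation in $\textbf{A}$, and the deterministic bias contracts at a rate controlled by the smallest real part of the eigenvalues of $\textbf{A}$, which is itself lower bounded by $\lambda$. Fixing ``the same parameters setting'' across algorithms makes the step-size schedule and the noise level comparable, so the only algorithm-dependent quantity in the leading term is $\lambda$.

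The hard part will be the rigorous control of the stochastic and higher-order terms: verifying the Robbins--Monro step-size conditions, bounding $\mathbb{E}[\|\varepsilon_{t+1}\|^2\mid\mathcal{F}_t]$ uniformly (or by $1 + \|e_t\|^2$) so that the $O(\alpha_t^2)$ contributions are provably lower order and do not swamp the $\lambda$-dependent term, and — for the cross-algorithm comparison — arguing that the noise covariances are close enough that the eigenvalue, rather than the constant multiplying the $O(\alpha)$ floor, is the deciding factor. This is precisely the technical content carried out in \citet{chen2024convergence}, to which I would defer for the detailed estimates.
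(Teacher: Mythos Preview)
The paper does not actually prove this theorem: it is stated as a result imported from \cite{chen2024convergence}, with no accompanying argument in the present paper. So there is no ``paper's own proof'' to compare your proposal against. Your sketch is a standard and reasonable Lyapunov/contraction analysis for linear stochastic approximation, and you yourself ultimately defer the detailed estimates to the same reference; in that sense your proposal is consistent with how the paper treats the statement.
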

% The proof of Theorem \ref{lemma_1}  and Assumptions 1, 2, 3, 6, and 7 can be found in Corollary 2 of \citet{chen2024convergence}.

Next, we will compute the minimum eigenvalue of 
$\textbf{A}$ for TD(0), TDC, and ETD in both on-policy and off-policy settings in a 2-state environment.
First, we will introduce the environment setup for the 2-state case in both on-policy and off-policy settings.

The ``1''$\rightarrow$``2'' problem has only two states. From each
state, there are two actions, left and right, which take
\begin{minipage}{0.2\textwidth}
the agent to the left or right state. All rewards are zero.
The feature $\bm{\Phi}=(1,2)^{\top}$ 
are assigned to the left and the right 
\end{minipage}
\begin{minipage}{0.25\textwidth}
  \centering
  \includegraphics[scale=0.25]{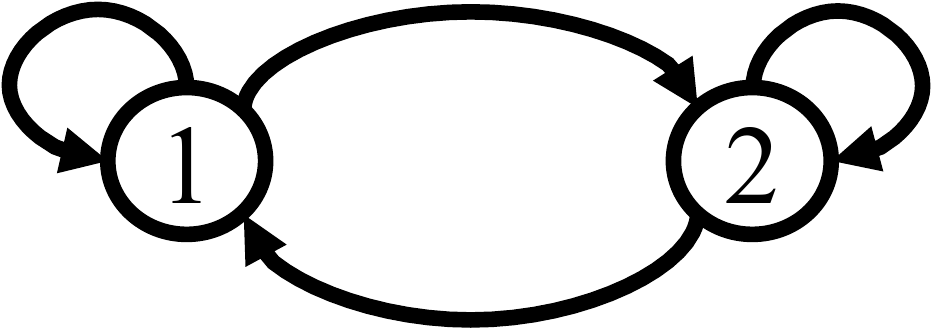}
\end{minipage}
state. The first policy takes equal
probability to left or right
in both states, i.e., 
$
\textbf{P}_{1}=
\begin{bmatrix}
0.5 & 0.5 \\
0.5 & 0.5
\end{bmatrix}
$.
The second policy only selects action rights in both states, i.e., 
$
\textbf{P}_{2}=
\begin{bmatrix}
0 & 1 \\
0 & 1
\end{bmatrix}
$.
The state distribution of
the first policy is $d_1 =(0.5,0.5)^{\top}$.
The state distribution of
the second policy is $d_1 =(0,1)^{\top}$.
The discount factor is $\gamma=0.9$.
In the on-policy setting, the behavior policy 
and the target policy are the same, so 
let $\textbf{P}_{\mu}=\textbf{P}_{\pi}=\textbf{P}_{1}$. 
In the off-policy setting, 
let $\textbf{P}_{\mu}=\textbf{P}_{1}$ and $\textbf{P}_{\pi}=\textbf{P}_{2}$. 

The key matrix $\textbf{A}_{\text{on}}$ of on-policy TD(0) is
\begin{equation*}
 \textbf{A}_{\text{on}} = \bm{\Phi}^{\top}\textbf{D}_{\pi}(\textbf{I}-\gamma \textbf{P}_{\pi})\bm{\Phi},
\end{equation*}
where $\bm{\Phi}$ is the $|S| \times m$ matrix with the $\phi(s)$ as its rows, and $\textbf{D}_{\pi}$ is the $|S| \times |S|$ diagonal
matrix with $d_{\pi}$ on its diagonal. $d_{\pi}$ is a vector, each component representing the steady-state
distribution under policy $\pi$. $\textbf{P}_{\pi}$ denote the $|S| \times |S|$ matrix of transition probabilities under $\pi$. And $\textbf{P}_{\pi}^{\top}d_{\pi}=d_{\pi}$.

The key matrix $\textbf{A}_{\text{off}}$ of off-policy TD(0) is
\begin{equation*}
 \textbf{A}_{\text{off}} = \bm{\Phi}^{\top}\textbf{D}_{\mu}(\textbf{I}-\gamma \textbf{P}_{\pi})\bm{\Phi},
\end{equation*}
where $\textbf{D}_{\mu}$ is the $|S| \times |S|$ diagonal
matrix with $d_{\mu}$ on its diagonal. $d_{\mu}$ is a vector, each component representing the steady-state
distribution under behavior policy $\mu$.

In the off-policy 2-state, 
$\textbf{A}_{\text{off}}=-0.2$, which means that off-policy TD(0) cannot stably converge, 
while , in the on-policy 2-state, $\textbf{A}_{\text{on}}=0.475$, which means that on-policy TD(0) can stably converge.

The key matrix $\textbf{A}_{\text{TDC}}= \textbf{A}^{\top}_{\text{off}}\textbf{C}^{-1}\textbf{A}_{\text{off}}$, 
where $\textbf{C}=\mathbb{E}[\bm{\bm{\phi}}\bm{\bm{\phi}}^{\top}]$. 
In the 2-state counterexample, 
$\textbf{A}_{\text{TDC}}=0.016$, which means that TDC can stably converge.

The key matrix $\textbf{A}_{\text{TDC}}$ of on-policy TDC is
\begin{equation*}
 \textbf{A}_{\text{TDC}} = \textbf{A}^{\top}_{\text{on}}\textbf{C}^{-1}\textbf{A}_{\text{on}}.
\end{equation*}
The key matrix $\textbf{A}_{\text{TDC}}$ of off-policy TDC is
\begin{equation*}
 \textbf{A}_{\text{TDC}} = \textbf{A}^{\top}_{\text{off}}\textbf{C}^{-1}\textbf{A}_{\text{off}}.
\end{equation*}
$\textbf{A}_{\text{TDC}}=0.016$ in the off-policy 2-state and $\textbf{A}_{\text{TDC}}=0.09025$ 
in the on-policy 2-state, which means that TDC can stably converge in two settings.

To address the issue of the key matrix $\textbf{A}_{\text{off}}$
in off-policy TD(0) being non-positive definite,
 a scalar variable, $F_t$,  
is introduced to obtain the off-policy TD(0) algorithm, 
which ensures convergence under off-policy 
conditions.

The key matrix $\textbf{A}_{\text{ETD}}$ is
\begin{equation*}
 \textbf{A}_{\text{ETD}} = \bm{\Phi}^{\top}\textbf{F}(\textbf{I}-\gamma \textbf{P}_{\pi})\bm{\Phi},
\end{equation*}
where 
$\textbf{F}$ is a diagonal matrix with diagonal elements
$f(s)\dot{=}d_{\mu}(s)\lim_{t\rightarrow \infty}\mathbb{E}_{\mu}[F_t|S_t=s]$,
which we assume exists. 
The vector $\textbf{f}\in \mathbb{R}^N$ with components 
$[\textbf{f}]_s\dot{=}f(s)$ can be written as
\begin{equation*}
\begin{split}
\textbf{f}&=\textbf{d}_{\mu}+\gamma \textbf{P}_{\pi}^{\top}\textbf{d}_{\mu}+(\gamma \textbf{P}_{\pi}^{\top})^2\textbf{d}_{\mu}+\ldots\\
&=(\textbf{I}-\gamma\textbf{P}_{\pi}^{\top})^{-1}\textbf{d}_{\mu}.
\end{split}
\end{equation*}
In the off-policy 2-state, 
$\textbf{A}_{\text{ETD}}=3.4$ and $\textbf{A}_{\text{ETD}}=4.75$ for on-policy, which means that ETD can stably converge.

Table \ref{tab:min_eigenvalues} shows Minimum eigenvalues 
of various algorithms in the 2-state counterexample.

In both the on-policy 2-state environment 
and the off-policy 2-state environment, 
the minimum eigenvalue of the key matrix for 
ETD is larger than that of TD(0) and TDC, indicating 
that ETD has the fastest convergence rate.

Minimum eigenvalue larger, the algorithm's convergence faster.
To derive an algorithm with a larger minimum eigenvalue for the matrix 
$\textbf{A}$, it is necessary to propose new objective functions. 
The mentioned objective functions in the Introduction 
are all forms of error. Is minimizing error the only option 
for value-based reinforcement learning?
Based on this observation, 
we propose alternative objective functions instead of minimizing errors.

\section{Variance Minimization Algorithms}
This section will introduce two new objective functions and 
three new algorithms, including one on-policy algorithm and two off-policy algorithms, and calculate the minimum eigenvalue 
of $\textbf{A}$ for each of the three algorithms under on-policy and 
off-policy in a 2-state environment, thereby comparing the 
convergence speed of the three algorithms.

\subsection{Variance Minimization TD Learning: VMTD}
For on-policy learning,
a novel objective function, Variance of Bellman Error (VBE), is proposed as follows:

\begin{align}
 \arg \min_{\theta}\text{VBE}(\theta) &= \arg \min_{\theta}\mathbb{E}[(\mathbb{E}[\delta_t|S_t]-\mathbb{E}[\mathbb{E}[\delta_t|S_t]])^2] \\ 
 &= \arg \min_{\theta,\omega} \mathbb{E}[(\mathbb{E}[\delta_t|S_t]-\omega)^2]\notag
\end{align}
where $\delta_t$ is the TD error as follows:
\begin{equation}
\delta_t = r_{t+1}+\gamma
\theta_t^{\top}\phi_{t+1}-\theta_t^{\top}\phi_t.
\label{delta}
\end{equation}
Clearly, it is no longer to minimize Bellman errors. 

First, the parameter  $\omega$ is derived directly based on
stochastic gradient descent:
\begin{equation}
\omega_{t+1}\leftarrow \omega_{t}+\beta_t(\delta_t-\omega_t),
\label{omega}
\end{equation}

Then, based on stochastic semi-gradient descent, the update of 
the parameter $\theta$ is as follows:
\begin{equation}
\theta_{t+1}\leftarrow
\theta_{t}+\alpha_t(\delta_t-\omega_t)\phi_t.
\label{theta}
\end{equation}

The semi-gradient of the (2) with respect to $\theta$ is
\begin{equation*}
 \begin{array}{ccl}
 &&-\frac{1}{2}\nabla \text{VBE}({\theta}) \\
 &=& \mathbb{E}[(\mathbb{E}[\delta_t|S_t]-\mathbb{E}[\mathbb{E}[\delta_t|S_t]])(\phi_t -\mathbb{E}[\phi_t])] \\
 &=& \mathbb{E}[\delta_t \phi_t] -\mathbb{E}[\delta_t] \mathbb{E}[\phi_t] , 
 % &=&-\mathbb{E}\Big[\Big( (\phi_t - \gamma\phi_t')- \mathbb{E}[ (\phi_t- \gamma {\phi_t}')]\Big)\phi_t^{\top} \Big]\theta + \mathbb{E}( r_{t+1}- \mathbb{E}[r_{t+1}])\bm{\phi_t}
\end{array}
\end{equation*}
The key matrix $\textbf{A}_{\text{VMTD}}$ and $b_{\text{VMTD}}$ of on-policy VMTD is
\begin{equation*}
 \begin{array}{ccl}
 &&\textbf{A}_{\text{VMTD}} \\
 &=& \mathbb{E}[(\phi - \gamma \phi')\phi^{\top}] - \mathbb{E}[\phi - \gamma \phi']\mathbb{E}[\phi^{\top}]\\
 &=&\sum_{s}d_{\pi}(s)\phi(s)\Big(\phi(s) -\gamma  \sum_{s'}[\textbf{P}_{\pi}]_{ss'}\phi(s') \Big)^{\top} \\
 && -\sum_{s}d_{\pi}(s)\phi(s) \cdot \sum_{s}d_{\pi}(s)\Big(\phi(s) -\gamma \sum_{s'}[\textbf{P}_{\pi}]_{ss'}\phi(s') \Big)^{\top}\\
 &=& \bm{\Phi}^{\top}\textbf{D}_{\mu}(\textbf{I}-\gamma \textbf{P}_{\pi})\bm{\Phi} -\bm{\Phi}^{\top}d_{\pi}d_{\pi}^{\top}(\textbf{I}-\gamma \textbf{P}_{\pi})\bm{\Phi}\\
 &=& \bm{\Phi}^{\top}(\textbf{D}_{\pi}-d_{\pi}d_{\pi}^{\top})(\textbf{I}-\gamma \textbf{P}_{\pi})\bm{\Phi},
\end{array}
\end{equation*}
\begin{equation*}
 \begin{array}{ccl}
  &&b_{\text{VMTD}}\\ 
  &=& \mathbb{E}( r- \mathbb{E}[r])\phi  \\
  &=& \mathbb{E}[r\phi] - \mathbb{E}[r]\mathbb{E}[\phi]\\
   &=& \bm{\Phi}^{\top}(\textbf{D}_{\pi}-d_{\pi}d_{\pi}^{\top})r_\pi.
 \end{array}
\end{equation*}
It can be easily obtained that The key matrix $\textbf{A}_{\text{VMTD}}$ and $b_{\text{VMTD}}$ of off-policy VMTD are, respectively, 
\begin{equation*}
 \textbf{A}_{\text{VMTD}} = \bm{\Phi}^{\top}(\textbf{D}_{\mu}-d_{\mu}d_{\mu}^{\top})(\textbf{I}-\gamma \textbf{P}_{\pi})\bm{\Phi},
 \end{equation*}
 \begin{equation*}
 b_{\text{VMTD}}=\bm{\Phi}^{\top}(\textbf{D}_{\mu}-d_{\mu}d_{\mu}^{\top})r_\pi,
 \end{equation*}
 In the on-policy 2-state environment, the minimum eigenvalue 
of the key matrix for VMTD is greater than that of TDC and smaller than that of TD(0) and ETD, 
indicating that VMTD converges faster than TDC and slower than TD(0) and ETD in this 
environment. In the off-policy 2-state environment, 
the minimum eigenvalue of the key matrix for VMTD is 
greater than 0, suggesting that VMTD can converge stably, while TD(0) diverges.

\subsection{Variance Minimization TDC Learning: VMTDC}
For off-policy learning, we propose a new objective function, 
called Variance of Projected Bellman error (VPBE), 
and the corresponding algorithm is called VMTDC.
\begin{align}
 &\text{VPBE}(\bm{\theta}) \notag\\
 &= \mathbb{E}[(\delta-\mathbb{E}[\delta]) \bm{\phi}]^{\top} \mathbb{E}[\bm{\phi} \bm{\phi}^{\top}]^{-1}\mathbb{E}[(\delta -\mathbb{E}[\delta ])\bm{\phi}] \\ 
 &= \mathbb{E}[(\delta-\omega) \bm{\phi}]^{\top} \mathbb{E}[\bm{\phi} \bm{\phi}^{\top}]^{-1}\mathbb{E}[(\delta -\omega)\bm{\phi}] ,
\end{align}
where 
% $\textbf{W}_{\bm{\theta}}$ is viewed as vectors with every element being equal to 
% $||\textbf{V}_{\bm{\theta}} - \textbf{T}\textbf{V}_{\bm{\theta}}||^{2}_{\mu}$ and 
$\omega$ is used to approximate $\mathbb{E}[\delta]$, i.e., $\omega \doteq\mathbb{E}[\delta] $.

The gradient of the (6) with respect to $\theta$ is
\begin{equation*}
 \begin{array}{ccl}
 -\frac{1}{2}\nabla \text{VPBE}({\theta}) &=& -\mathbb{E}\Big[\Big( (\gamma {\phi}' - {\phi}) - \mathbb{E}[ (\gamma {\phi}' - {\phi})]\Big){\phi}^{\top} \Big] \\
 & & \mathbb{E}[{\phi} {\phi}^{\top}]^{-1} \mathbb{E}[( \delta -\mathbb{E}[ \delta]){\phi}]\\
 &=& \mathbb{E}\Big[\Big( ({\phi} - \gamma {\phi}')- \mathbb{E}[ ({\phi} - \gamma {\phi}')]\Big){\phi}^{\top} \Big]  \\
 & & \mathbb{E}[{\phi} {\phi}^{\top}]^{-1}\\
 & & \mathbb{E}\Big[\Big( r + \gamma {{\phi}'}^{\top} {\theta} -{\phi}^{\top} {\theta}\\
 & & \hspace{2em} -\mathbb{E}[ r + \gamma {{\phi}'}^{\top} {\theta} -{\phi}^{\top} {\theta}]\Big){\phi} \Big].
 \end{array}
\end{equation*}
It can be easily obtained that The key matrix $\textbf{A}_{\text{VMTDC}}$ and $b_{\text{VMTDC}}$ of VMTDC are, respectively, 
\begin{equation*}
 \textbf{A}_{\text{VMTDC}} = \textbf{A}_{\text{VMTD}}^{\top} \textbf{C}^{-1}\textbf{A}_{\text{VMTD}},
 \end{equation*}
 \begin{equation*}
 b_{\text{VMTDC}}=\textbf{A}_{\text{VMTD}}^{\top} \textbf{C}^{-1}b_{\text{VMTD}},
 \end{equation*}
where, for on-policy, $\textbf{A}_{\text{VMTD}}=\bm{\Phi}^{\top}(\textbf{D}_{\pi}-d_{\pi}d_{\pi}^{\top})(\textbf{I}-\gamma \textbf{P}_{\pi})\bm{\Phi}$ 
and $b_{\text{VMTD}}=\bm{\Phi}^{\top}(\textbf{D}_{\pi}-d_{\pi}d_{\pi}^{\top})r_\pi$ and, for off-policy, 
$\textbf{A}_{\text{VMTD}}=\bm{\Phi}^{\top}(\textbf{D}_{\mu}-d_{\mu}d_{\mu}^{\top})(\textbf{I}-\gamma \textbf{P}_{\pi})\bm{\Phi}$ 
and $b_{\text{VMTD}}=\bm{\Phi}^{\top}(\textbf{D}_{\mu}-d_{\mu}d_{\mu}^{\top})r_\pi$.

In the process of computing the gradient of the (7) with respect to $\theta$, 
$\omega$ is treated as a constant.
So, the derivation process of the VMTDC algorithm is the same 
as for the TDC algorithm, the only difference is that the original $\delta$ is replaced by $\delta-\omega$.
Therefore, we can easily get the updated formula of VMTDC, as follows:
\begin{equation}
 \bm{\theta}_{k+1}\leftarrow\bm{\theta}_{k}+\alpha_{k}[(\delta_{k}- \omega_k) \bm{\phi}_k\\
 - \gamma\bm{\phi}_{k+1}(\bm{\phi}^{\top}_k \bm{u}_{k})],
\label{thetavmtdc}
\end{equation}
\begin{equation}
 \bm{u}_{k+1}\leftarrow \bm{u}_{k}+\zeta_{k}[\delta_{k}-\omega_k - \bm{\phi}^{\top}_k \bm{u}_{k}]\bm{\phi}_k,
\label{uvmtdc}
\end{equation}
and
\begin{equation}
 \omega_{k+1}\leftarrow \omega_{k}+\beta_k (\delta_k- \omega_k).
 \label{omegavmtdc}
\end{equation}
The VMTDC algorithm (\ref{thetavmtdc}) is derived to work 
with a given set of sub-samples—in the form of 
triples $(S_k, R_k, S'_k)$ that match transitions 
from both the behavior and target policies.

In the on-policy 2-state environment, the minimum eigenvalue 
of the key matrix for VMTDC is smaller than that of TD(0), TDC, ETD, and VMTD 
indicating that VMTDC converges slower than them in this 
on-policy. In the off-policy 2-state environment, the 
the minimum eigenvalue of the key matrix for VMTDC is greater than TDC, 
suggesting that VMTDC converges faster than TDC in off-policy 
environment.

\subsection{Variance Minimization ETD Learning: VMETD}
Based on the off-policy TD algorithm, a scalar, $F$,  
is introduced to obtain the ETD algorithm, 
which ensures convergence under off-policy 
conditions. This paper further introduces a scalar, 
$\omega$, based on the ETD algorithm to obtain VMETD.
VMETD by the following update:
\begin{equation}
\label{fvmetd}
 F_t \leftarrow \gamma \rho_{t-1}F_{t-1}+1,
\end{equation}
\begin{equation}
 \label{thetavmetd}
 {\theta}_{t+1}\leftarrow {\theta}_t+\alpha_t (F_t \rho_t\delta_t - \omega_{t}){\phi}_t,
\end{equation}
\begin{equation}
 \label{omegavmetd}
 \omega_{t+1} \leftarrow \omega_t+\beta_t(F_t  \rho_t \delta_t - \omega_t),
\end{equation}
where $\rho_t =\frac{\pi(A_t | S_t)}{\mu(A_t | S_t)}$ and $\omega$ is used to estimate $\mathbb{E}[F \rho\delta]$, i.e., $\omega \doteq \mathbb{E}[F \rho\delta]$.

(\ref{thetavmetd}) can be rewritten as
\begin{equation*}
 \begin{array}{ccl}
 {\theta}_{t+1}&\leftarrow& {\theta}_t+\alpha_t (F_t \rho_t\delta_t - \omega_t){\phi}_t -\alpha_t \omega_{t+1}{\phi}_t\\
  &=&{\theta}_{t}+\alpha_t(F_t\rho_t\delta_t-\mathbb{E}_{\mu}[F_t\rho_t\delta_t|{\theta}_t]){\phi}_t\\
 &=&{\theta}_t+\alpha_t F_t \rho_t (r_{t+1}+\gamma {\theta}_t^{\top}{\phi}_{t+1}-{\theta}_t^{\top}{\phi}_t){\phi}_t\\
 & & \hspace{2em} -\alpha_t \mathbb{E}_{\mu}[F_t \rho_t \delta_t]{\phi}_t\\
 &=& {\theta}_t+\alpha_t \{\underbrace{(F_t\rho_tr_{t+1}-\mathbb{E}_{\mu}[F_t\rho_t r_{t+1}]){\phi}_t}_{{b}_{\text{VMETD},t}}\\
 &&\hspace{-7em}- \underbrace{(F_t\rho_t{\phi}_t({\phi}_t-\gamma{\phi}_{t+1})^{\top}-{\phi}_t\mathbb{E}_{\mu}[F_t\rho_t ({\phi}_t-\gamma{\phi}_{t+1})]^{\top})}_{\textbf{A}_{\text{VMETD},t}}{\theta}_t\}.
 \end{array}
\end{equation*}
Therefore, 
\begin{equation*}
 \begin{array}{ccl}
  &&\textbf{A}_{\text{VMETD}}\\
  &=&\lim_{t \rightarrow \infty} \mathbb{E}[\textbf{A}_{\text{VMETD},t}]\\
  &=& \lim_{t \rightarrow \infty} \mathbb{E}_{\mu}[F_t \rho_t {\phi}_t ({\phi}_t - \gamma {\phi}_{t+1})^{\top}]\\  
  &&\hspace{1em}- \lim_{t\rightarrow \infty} \mathbb{E}_{\mu}[  {\phi}_t]\mathbb{E}_{\mu}[F_t \rho_t ({\phi}_t - \gamma {\phi}_{t+1})]^{\top}\\
  &=& \lim_{t \rightarrow \infty} \mathbb{E}_{\mu}[{\phi}_tF_t \rho_t ({\phi}_t - \gamma {\phi}_{t+1})^{\top}]\\   
  &&\hspace{1em}-\lim_{t \rightarrow \infty} \mathbb{E}_{\mu}[ {\phi}_t]\lim_{t \rightarrow \infty}\mathbb{E}_{\mu}[F_t \rho_t ({\phi}_t - \gamma {\phi}_{t+1})]^{\top}\\
  && \hspace{-2em}=\sum_{s} d_{\mu}(s)\lim_{t \rightarrow \infty}\mathbb{E}_{\mu}[F_t|S_t = s]\mathbb{E}_{\mu}[\rho_t\phi_t(\phi_t - \gamma \phi_{t+1})^{\top}|S_t= s]\\   
  &&\hspace{1em}-\sum_{s} d_{\mu}(s)\phi(s)\sum_{s} d_{\mu}(s)\lim_{t \rightarrow \infty}\mathbb{E}_{\mu}[F_t|S_t = s]\\
  &&\hspace{7em}\mathbb{E}_{\mu}[\rho_t(\phi_t - \gamma \phi_{t+1})^{\top}|S_t = s]\\
  &=& \sum_{s} f(s)\mathbb{E}_{\pi}[\phi_t(\phi_t- \gamma \phi_{t+1})^{\top}|S_t = s]\\   
  &&\hspace{1em}-\sum_{s} d_{\mu}(s)\phi(s)\sum_{s} f(s)\mathbb{E}_{\pi}[(\phi_t- \gamma \phi_{t+1})^{\top}|S_t = s]\\
  &=&\sum_{s} f(s) \bm{\phi}(s)(\bm{\phi}(s) - \gamma \sum_{s'}[\textbf{P}_{\pi}]_{ss'}\bm{\phi}(s'))^{\top}  \\
  &&\hspace{1em} -\sum_{s} d_{\mu}(s) {\phi}(s) * \sum_{s} f(s)({\phi}(s) - \gamma \sum_{s'}[\textbf{P}_{\pi}]_{ss'}{\phi}(s'))^{\top}\\
  &=&{\bm{\Phi}}^{\top} \textbf{F} (\textbf{I} - \gamma \textbf{P}_{\pi}) \bm{\Phi} - {\bm{\Phi}}^{\top} {d}_{\mu} {f}^{\top} (\textbf{I} - \gamma \textbf{P}_{\pi}) \bm{\Phi}  \\
  &=&{\bm{\Phi}}^{\top} (\textbf{F} - {d}_{\mu} {f}^{\top}) (\textbf{I} - \gamma \textbf{P}_{\pi}){\bm{\Phi}} \\
  &=&{\bm{\Phi}}^{\top} (\textbf{F} (\textbf{I} - \gamma \textbf{P}_{\pi})-{d}_{\mu} {f}^{\top} (\textbf{I} - \gamma \textbf{P}_{\pi})){\bm{\Phi}} \\
  &=&{\bm{\Phi}}^{\top} (\textbf{F} (\textbf{I} - \gamma \textbf{P}_{\pi})-{d}_{\mu} {d}_{\mu}^{\top} ){\bm{\Phi}},
 \end{array}
\end{equation*}
\begin{equation*}
 \begin{array}{ccl}
  &&{b}_{\text{VMETD}}\\
  &=&\lim_{t \rightarrow \infty} \mathbb{E}[{b}_{\text{VMETD},t}]\\
  &=& \lim_{t \rightarrow \infty} \mathbb{E}_{\mu}[F_t\rho_tR_{t+1}{\phi}_t]\\
  &&\hspace{2em} - \lim_{t\rightarrow \infty} \mathbb{E}_{\mu}[{\phi}_t]\mathbb{E}_{\mu}[F_t\rho_kR_{k+1}]\\  
  &=& \lim_{t \rightarrow \infty} \mathbb{E}_{\mu}[{\phi}_tF_t\rho_tr_{t+1}]\\
  &&\hspace{2em} - \lim_{t\rightarrow \infty} \mathbb{E}_{\mu}[  {\phi}_t]\mathbb{E}_{\mu}[{\phi}_t]\mathbb{E}_{\mu}[F_t\rho_tr_{t+1}]\\ 
  &=& \lim_{t \rightarrow \infty} \mathbb{E}_{\mu}[{\phi}_tF_t\rho_tr_{t+1}]\\
  &&\hspace{2em} - \lim_{t \rightarrow \infty} \mathbb{E}_{\mu}[ {\phi}_t]\lim_{t \rightarrow \infty}\mathbb{E}_{\mu}[F_t\rho_tr_{t+1}]\\  
  &=&\sum_{s} f(s) {\phi}(s)r_{\pi} - \sum_{s} d_{\mu}(s) {\phi}(s) * \sum_{s} f(s)r_{\pi}  \\
  &=&\bm{\bm{\Phi}}^{\top}(\textbf{F}-{d}_{\mu} {f}^{\top}){r}_{\pi}.
 \end{array}
\end{equation*}
In both the off-policy 2-state environment and the on-policy 2-state environment, the minimum eigenvalue 
of the key matrix for VMETD is greater than that of TD(0), TDC, VMTD, and VMTDC and smaller than that of ETD,
indicating that VMTDC converges faster than TD(0), TDC, VMTD, and VMTDC and slower than ETD. 
However, subsequent experiments showed that the VMETD algorithm converges more smoothly and performs best in controlled experiments.

\section{Theoretical Analysis}
This section primarily focuses on proving the convergence of VMTD, VMTDC, and VMETD.
\begin{theorem}
 \label{theorem1}(Convergence of VMTD).
 In the case of on-policy learning, consider the iterations (\ref{omega}) and (\ref{theta}) with (\ref{delta})  of VMTD.
 Let the step-size sequences $\alpha_k$ and $\beta_k$, $k\geq 0$ satisfy in this case $\alpha_k,\beta_k>0$, for all $k$,
 $
 \sum_{k=0}^{\infty}\alpha_k=\sum_{k=0}^{\infty}\beta_k=\infty,
 $
 $
 \sum_{k=0}^{\infty}\alpha_k^2<\infty,
 $
 $
 \sum_{k=0}^{\infty}\beta_k^2<\infty,
 $
 and  
 $
 \alpha_k = o(\beta_k).
 $
 Assume that $(\phi_k,r_k,\phi_k')$ is an i.i.d. sequence with
 uniformly bounded second moments, where $\phi_k$ and $\phi'_{k}$ are sampled from the same Markov chain.
 Let $\textbf{A} = \mathrm{Cov}(\phi,\phi-\gamma\phi')$,
 $b=\mathrm{Cov}(r,\phi)$.
 Assume that matrix $\textbf{A}$ is non-singular. 
 Then the parameter vector $\theta_k$ converges with probability one 
 to $\textbf{A}^{-1}b$.
\end{theorem}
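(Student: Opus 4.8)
The two recursions \eqref{omega} and \eqref{theta} form a two-timescale stochastic approximation, so the natural route is the ODE method for such schemes (Borkar's two-timescale theorem together with the Borkar--Meyn stability criterion), following the template familiar from the convergence analyses of GTD2/TDC with $\delta_t$ replaced by $\delta_t-\omega_t$ and with $\omega$ as the fast auxiliary variable. Because $\alpha_k=o(\beta_k)$, the iterate $\omega_k$ lives on the fast timescale and $\theta_k$ on the slow one. First I would write each update as ``mean field $+$ martingale difference $+$ asymptotically negligible term'': with $\delta_t(\theta)=r_{t+1}+(\gamma\phi_{t+1}-\phi_t)^\top\theta$, the one-step conditional expectations are affine in $(\theta,\omega)$, so the i.i.d.\ hypothesis with uniformly bounded second moments immediately delivers the Lipschitz/linear-growth bounds on the mean fields and the square-integrability of the martingale-difference noise that the two-timescale theorem requires.

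Next I would analyze the fast subsystem with $\theta$ held fixed: its driving ODE is the scalar linear equation $\dot\omega=\mathbb{E}[\delta_t(\theta)]-\omega$, whose unique globally asymptotically stable equilibrium is $\omega^*(\theta)=\mathbb{E}[\delta_t(\theta)]=\mathbb{E}[r]+\mathbb{E}[\gamma\phi'-\phi]^\top\theta$, an affine and hence Lipschitz function of $\theta$. Plugging $\omega=\omega^*(\theta)$ into the slow update produces the effective ODE $\dot\theta=\mathbb{E}[(\delta_t(\theta)-\omega^*(\theta))\phi_t]=\mathrm{Cov}(\phi_t,\delta_t(\theta))=b-\mathbf{A}\theta$, with $\mathbf{A}=\mathrm{Cov}(\phi,\phi-\gamma\phi')$ and $b=\mathrm{Cov}(r,\phi)$ exactly as in the semi-gradient computation preceding the statement; its only equilibrium is $\mathbf{A}^{-1}b$.

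The crux is showing that this effective ODE is globally asymptotically stable, equivalently that $-\mathbf{A}$ is Hurwitz; since the hypothesis only gives that $\mathbf{A}$ is non-singular, I must upgrade this to positive definiteness of the symmetric part $\mathbf{A}+\mathbf{A}^\top$. The key observation is that $\mathbf{A}$ is precisely the ordinary on-policy TD matrix built from the \emph{centered} features $\tilde\phi(s)=\phi(s)-\mathbb{E}_{d_\pi}[\phi]$: because $\phi$ and $\phi'$ have the same stationary marginal $d_\pi$, one obtains $\mathbf{A}=\mathbb{E}[\tilde\phi(\tilde\phi-\gamma\tilde\phi')^\top]$. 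Then for any $x\neq 0$, $x^\top\mathbf{A}x=\mathbb{E}[(\tilde\phi^\top x)^2]-\gamma\,\mathbb{E}[(\tilde\phi^\top x)(\tilde\phi'^\top x)]\ge(1-\gamma)\,\mathbb{E}[(\tilde\phi^\top x)^2]\ge 0$ by Cauchy--Schwarz and stationarity, with strict inequality unless $\tilde\phi^\top x=0$ almost surely; but in that degenerate case $\tilde\phi'^\top x=0$ almost surely too, so $\mathbf{A}x=0$, contradicting non-singularity. Hence $\mathbf{A}+\mathbf{A}^\top$ is positive definite, every eigenvalue of $\mathbf{A}$ has strictly positive real part, and $\mathbf{A}^{-1}b$ is a globally exponentially stable equilibrium of $\dot\theta=b-\mathbf{A}\theta$. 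The same positive definiteness makes the origin the globally asymptotically stable point of the scaled field $h_\infty(\theta)=-\mathbf{A}\theta$, so the Borkar--Meyn criterion gives almost-sure boundedness of $(\theta_k,\omega_k)$.

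With boundedness, the Lipschitz mean fields, the martingale noise, the globally asymptotically stable fast equilibrium $\omega^*(\cdot)$, and the globally asymptotically stable slow ODE all in place, the two-timescale theorem yields $\omega_k\to\omega^*(\theta_\infty)$ and $\theta_k\to\mathbf{A}^{-1}b$ almost surely, which is the assertion. I expect the only real obstacle to be the stability step — turning ``$\mathbf{A}$ non-singular'' into the Hurwitz/positive-definiteness property via the centered-feature identity — together with the routine checking needed to match the hypotheses of the off-the-shelf two-timescale and Borkar--Meyn theorems.
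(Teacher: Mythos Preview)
Your proposal is correct and follows the same overall architecture as the paper: the two-timescale Borkar framework, the fast ODE $\dot\omega=\mathbb{E}[\delta\mid\theta]-\omega$ with equilibrium $\omega^*(\theta)=\mathbb{E}[\delta\mid\theta]$, the slow ODE $\dot\theta=-\mathbf{A}\theta+b$, and the Borkar--Meyn criterion for boundedness. The one substantive difference is in the ``crux'' step, proving that the symmetric part of $\mathbf{A}=\mathrm{Cov}(\phi,\phi-\gamma\phi')$ is positive definite. You rewrite $\mathbf{A}$ as the ordinary on-policy TD matrix for the centered features $\tilde\phi=\phi-\mathbb{E}[\phi]$ and then apply the Tsitsiklis--Van~Roy style Cauchy--Schwarz bound $x^\top\mathbf{A}x\ge(1-\gamma)\,\mathbb{E}[(\tilde\phi^\top x)^2]$, using stationarity to equate the second moments of $\tilde\phi^\top x$ and $\tilde\phi'^\top x$. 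The paper instead uses a covariance polarization identity,
\[
\tfrac{1}{2}(\mathbf{A}+\mathbf{A}^\top)=\tfrac{1}{2}\bigl[(1-\gamma^2)\,\mathrm{Cov}(\phi,\phi)+\mathrm{Cov}(\phi-\gamma\phi',\phi-\gamma\phi')\bigr],
\]
again invoking $\mathrm{Cov}(\phi',\phi')=\mathrm{Cov}(\phi,\phi)$ from on-policy stationarity, to exhibit the symmetric part as a sum of two PSD matrices. Both arguments hinge on exactly the same stationarity fact and both reduce strict positivity to the non-singularity hypothesis; your route makes the connection to classical on-policy TD more explicit, while the paper's algebraic decomposition is slightly quicker to write down. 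Either way the remaining verification of the Borkar/Borkar--Meyn hypotheses is routine, as you note.
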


\begin{proof}
\label{th1proof}   
 The proof is  based on Borkar's Theorem   for
 general stochastic approximation recursions with two time scales
 \cite{borkar1997stochastic}. 

 A sketch proof is given as follows.
 In the fast time scale, the parameter $w$ converges to
  $\mathbb{E}[\delta|\theta_k]$.
 In the slow time scale,
 the associated ODE is
  \begin{equation*}
 \vec{h}(\theta(t))=-\textbf{A}\theta(t)+b.
 \end{equation*}
 \begin{equation}
  \begin{array}{ccl}
 A &=& \mathrm{Cov}(\phi,\phi-\gamma\phi')\\
    &=&\frac{\mathrm{Cov}(\phi,\phi)+\mathrm{Cov}(\phi-\gamma\phi',\phi-\gamma\phi')-\mathrm{Cov}(\gamma\phi',\gamma\phi')}{2}\\
    &=&\frac{\mathrm{Cov}(\phi,\phi)+\mathrm{Cov}(\phi-\gamma\phi',\phi-\gamma\phi')-\gamma^2\mathrm{Cov}(\phi',\phi')}{2}\\
    &=&\frac{(1-\gamma^2)\mathrm{Cov}(\phi,\phi)+\mathrm{Cov}(\phi-\gamma\phi',\phi-\gamma\phi')}{2},\\
  \end{array}
  \label{covariance}
  \end{equation}
 where we eventually used $\mathrm{Cov}(\phi',\phi')=\mathrm{Cov}(\phi,\phi)$
  \footnote{The covariance matrix $\mathrm{Cov}(\phi',\phi')$ is equal to
 the covariance matrix $\mathrm{Cov}(\phi,\phi)$ if the initial state is re-reachable or
 initialized randomly in a Markov chain for on-policy update.}.
 Note that the covariance matrix $\mathrm{Cov}(\phi,\phi)$ and
  $\mathrm{Cov}(\phi-\gamma\phi',\phi-\gamma\phi')$ are semi-positive
 definite. Then, the matrix $\textbf{A}$ is semi-positive definite because  $\textbf{A}$ is
 linearly combined  by  two positive-weighted semi-positive definite matrice
 (\ref{covariance}).
 Furthermore, $\textbf{A}$ is nonsingular due to the assumption.
 Hence, the matrix $\textbf{A}$ is positive definite. And,
 the parameter $\theta$ converges to $\textbf{A}^{-1}b$.
\end{proof}
Please refer to the appendix for VMTD's detailed proof process.

\begin{theorem}
 \label{theorem2}(Convergence of VMTDC).
 In the case of off-policy learning, consider the iterations (\ref{omegavmtdc}), (\ref{uvmtdc}) and (\ref{thetavmtdc})   of VMTDC.
 Let the step-size sequences $\alpha_k$, $\zeta_k$ and $\beta_k$, $k\geq 0$ satisfy in this case $\alpha_k,\zeta_k,\beta_k>0$, for all $k$,
 $
 \sum_{k=0}^{\infty}\alpha_k=\sum_{k=0}^{\infty}\beta_k=\sum_{k=0}^{\infty}\zeta_k=\infty,
 $
 $
 \sum_{k=0}^{\infty}\alpha_k^2<\infty,
 $
 $
 \sum_{k=0}^{\infty}\zeta_k^2<\infty,
 $
 $
 \sum_{k=0}^{\infty}\beta_k^2<\infty,
 $
 and  
 $
 \alpha_k = o(\zeta_k),
 $
 $
 \zeta_k = o(\beta_k).
 $
 Assume that $(\phi_k,r_k,\phi_k')$ is an i.i.d. sequence with
 uniformly bounded second moments.
 Let $\textbf{A} = \mathrm{Cov}(\phi,\phi-\gamma\phi')$,
 $b=\mathrm{Cov}(r,\phi)$, and $\textbf{C}=\mathbb{E}[\phi\phi^{\top}]$.
 Assume that  $\textbf{A}$ and $\textbf{C}$ are non-singular matrices. 
 Then the parameter vector $\theta_k$ converges with probability one 
 to $\textbf{A}^{-1}b$.
\end{theorem}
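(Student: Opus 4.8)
The plan is to push the two-timescale argument behind Theorem~\ref{theorem1} to the three-timescale setting of VMTDC, invoking Borkar's theorem on stochastic approximation with coupled timescales \cite{borkar1997stochastic}. Since $\alpha_k=o(\zeta_k)$ and $\zeta_k=o(\beta_k)$, recursion (\ref{omegavmtdc}) for $\omega$ is the fastest, (\ref{uvmtdc}) for $u$ is intermediate, and (\ref{thetavmtdc}) for $\theta$ is the slowest; from the standpoint of $\omega$ the pair $(u,\theta)$ is quasi-static, and from that of $u$ the vector $\theta$ is quasi-static. First I would rewrite the three updates in the canonical form $x_{k+1}=x_k+(\text{step}_k)\big(g(x_k)+M_{k+1}\big)$ and use the i.i.d.\ hypothesis with uniformly bounded second moments to check that the $M_{k+1}$ are martingale differences with conditionally bounded variance and that the mean fields $g$ are affine, hence globally Lipschitz.

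On the fastest timescale, with $(u,\theta)$ held fixed, (\ref{omegavmtdc}) follows the ODE $\dot\omega=\mathbb{E}[\delta\mid\theta]-\omega$ (note $\mathbb{E}[\delta]$ does not depend on $u$), with unique globally asymptotically stable point $\omega^{\infty}(\theta)=\mathbb{E}[\delta\mid\theta]$. On the intermediate timescale, with $\theta$ fixed and $\omega=\omega^{\infty}(\theta)$, (\ref{uvmtdc}) follows $\dot u=\mathbb{E}[(\delta-\omega^{\infty}(\theta)-\phi^{\top}u)\phi]=(b-\textbf{A}\theta)-\textbf{C}u$, which is globally asymptotically stable because $\textbf{C}=\mathbb{E}[\phi\phi^{\top}]$ is positive definite, with equilibrium $u^{\infty}(\theta)=\textbf{C}^{-1}(b-\textbf{A}\theta)$, where $\textbf{A}=\mathrm{Cov}(\phi,\phi-\gamma\phi')$ and $b=\mathrm{Cov}(r,\phi)$. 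Substituting both equilibria into (\ref{thetavmtdc}) yields the slow ODE $\dot\theta=\vec{h}(\theta)=-\textbf{G}\theta+g$; running the same algebra as in the TDC derivation, the only change being that $\delta$ is everywhere replaced by $\delta-\omega$, the driving matrix is the one the paper identifies as $\textbf{A}_{\text{VMTDC}}=\textbf{G}=\textbf{A}^{\top}\textbf{C}^{-1}\textbf{A}$, with $g=\textbf{A}^{\top}\textbf{C}^{-1}b$, so the unique equilibrium is $\textbf{A}^{-1}b$.

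The crux is then to show this equilibrium is globally asymptotically stable, i.e.\ that $\textbf{G}=\textbf{A}^{\top}\textbf{C}^{-1}\textbf{A}$ is positive definite. Here I would reuse decomposition (\ref{covariance}) from the proof of Theorem~\ref{theorem1}: $\textbf{A}=\mathrm{Cov}(\phi,\phi-\gamma\phi')$ is a positive-weighted sum of the positive semidefinite matrices $\mathrm{Cov}(\phi,\phi)$ and $\mathrm{Cov}(\phi-\gamma\phi',\phi-\gamma\phi')$, hence positive semidefinite, and nonsingular by assumption, hence positive definite; combined with $\textbf{C}\succ 0$ this gives $\textbf{A}^{\top}\textbf{C}^{-1}\textbf{A}\succ 0$. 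It remains to discharge the standard hypotheses of the multi-timescale theorem: the step-size conditions are assumed, and almost-sure boundedness of $(\omega_k,u_k,\theta_k)$ can be obtained by a Borkar--Meyn-type scaling argument, since the scaled ODEs $\dot\omega=-\omega$, $\dot u=-\textbf{C}u$, $\dot\theta=-\textbf{G}\theta$ are globally asymptotically stable. The theorem then gives $(\omega_k,u_k,\theta_k)\to(\omega^{\infty}(\textbf{A}^{-1}b),u^{\infty}(\textbf{A}^{-1}b),\textbf{A}^{-1}b)$ with probability one, and in particular $\theta_k\to\textbf{A}^{-1}b$.

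I expect the slow-timescale stability step to be the main obstacle: one must run the TDC-style algebra carefully so that the $\omega$- and $u$-corrections collapse exactly to $\textbf{A}^{\top}\textbf{C}^{-1}(b-\textbf{A}\theta)$, and then certify positive definiteness of $\textbf{A}^{\top}\textbf{C}^{-1}\textbf{A}$ through the variance decomposition, which is precisely what replaces the error-minimization argument used for TDC. A secondary nuisance is establishing a.s.\ boundedness of the three coupled iterates, since the stability of each ODE in isolation has to be assembled correctly across the three timescales.
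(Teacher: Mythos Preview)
Your proposal follows essentially the same three-timescale stochastic approximation argument as the paper, and the overall structure---fastest $\omega$, intermediate $u$, slowest $\theta$, with the slow ODE $\dot\theta=\textbf{A}^{\top}\textbf{C}^{-1}(b-\textbf{A}\theta)$ and Borkar--Meyn scaling for boundedness---matches the paper's proof.

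There is, however, one step that does not go through as written. To certify that $\textbf{A}^{\top}\textbf{C}^{-1}\textbf{A}\succ 0$, you invoke the covariance decomposition (\ref{covariance}) to argue that $\textbf{A}=\mathrm{Cov}(\phi,\phi-\gamma\phi')$ is itself positive definite. That decomposition depends on the identity $\mathrm{Cov}(\phi',\phi')=\mathrm{Cov}(\phi,\phi)$, which the paper justifies only for on-policy sampling (see the footnote attached to (\ref{covariance})). Theorem~\ref{theorem2} is stated for off-policy learning, where $\phi$ is drawn from the behavior policy's stationary distribution while $\phi'$ follows the target-policy transition, so the two marginals need not agree and $\textbf{A}$ need not be positive definite. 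The paper avoids this entirely: it simply uses that $\textbf{C}^{-1}$ is symmetric positive definite and $\textbf{A}$ is nonsingular by hypothesis, whence $x^{\top}\textbf{A}^{\top}\textbf{C}^{-1}\textbf{A}x=(\textbf{A}x)^{\top}\textbf{C}^{-1}(\textbf{A}x)>0$ for every $x\neq 0$. With that replacement, your argument coincides with the paper's.
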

\begin{proof}
 The proof is similar to that given by \cite{sutton2009fast} for TDC,
 but it is based on multi-time-scale stochastic approximation.
 
 A sketch proof is given as follows.
 In the fastest time scale, the parameter $w$ converges to
  $\mathbb{E}[\delta|u_k,\theta_k]$.
 In the second fast time scale,
 the parameter $u$ converges to $\textbf{C}^{-1}\mathbb{E}[(\delta-\mathbb{E}[\delta|\theta_k])\phi|\theta_k]$.
 In the slower time scale,
 the associated ODE is
  \begin{equation*}
 \vec{h}(\theta(t))=\textbf{A}^{\top}\textbf{C}^{-1}(-\textbf{A}\theta(t)+b).
 \end{equation*}
 The matrix $\textbf{A}^{\top}\textbf{C}^{-1}\textbf{A}$ is positive definite. Thus,
 the parameter $\theta$ converges to $\textbf{A}^{-1}b$.
\end{proof}
Please refer to the appendix for VMTDC's detailed proof process.

\begin{theorem}
 \label{theorem3}(Convergence of VMETD).
 In the case of off-policy learning, consider the iterations (\ref{fvmetd}), (\ref{omegavmetd}), and (\ref{thetavmetd}) of VMETD.
 Let the step-size sequences $\alpha_k$ and $\beta_k$, $k\geq 0$ satisfy in this case $\alpha_k,\beta_k>0$, for all $k$,
 $
 \sum_{k=0}^{\infty}\alpha_k=\sum_{k=0}^{\infty}\beta_k=\infty,
 $
 $
 \sum_{k=0}^{\infty}\alpha_k^2<\infty,
 $
 $
 \sum_{k=0}^{\infty}\beta_k^2<\infty,
 $
 and  
 $
 \alpha_k = o(\beta_k).
 $
 Assume that $(\bm{\bm{\phi}}_k,r_k,\bm{\bm{\phi}}_k')$ is an i.i.d. sequence with
 uniformly bounded second moments, where $\bm{\bm{\phi}}_k$ and $\bm{\bm{\phi}}'_{k}$ are sampled from the same Markov chain.
 Let $\textbf{A}_{\textbf{VMETD}} ={\bm{\Phi}}^{\top} (\textbf{F} (\textbf{I} - \gamma \textbf{P}_{\pi})-d_{\mu} d_{\mu}^{\top} ){\bm{\Phi}}$,
 $b_{\text{VMETD}}=\bm{\Phi}^{\top}(\textbf{F}-d_{\mu} f^{\top})r_{\pi}$.
 Assume that matrix $\textbf{A}$ is non-singular. 
 Then the parameter vector $\theta_k$ converges with probability one 
 to $\textbf{A}_{\textbf{VMETD}}^{-1}b_{\textbf{VMETD}}$.
\end{theorem}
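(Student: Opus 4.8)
The plan is to reuse the two–time–scale stochastic–approximation template of \cite{borkar1997stochastic} exactly as in the proofs of Theorems \ref{theorem1} and \ref{theorem2}, with $\omega$ as the fast variable (step-size $\beta_k$) and $\theta$ as the slow variable (step-size $\alpha_k$, $\alpha_k=o(\beta_k)$). The emphatic trace $F_t$ is not a learned parameter but an exogenous, data–driven scalar; so the first step I would record is that under the i.i.d./bounded–second–moment assumption together with $\gamma<1$ and bounded importance ratios, $F_t$ has uniformly bounded second moments and $\lim_{t\to\infty}\mathbb{E}_\mu[F_t\mid S_t=s]=f(s)$ with $f=(\mathbf{I}-\gamma\mathbf{P}_\pi^\top)^{-1}d_\mu$. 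This is precisely what legitimizes the limiting identities $\lim_t\mathbb{E}_\mu[\mathbf{A}_{\text{VMETD},t}]=\mathbf{A}_{\text{VMETD}}$ and $\lim_t\mathbb{E}_\mu[b_{\text{VMETD},t}]=b_{\text{VMETD}}$ already computed in the body.

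On the fast time scale, freeze $\theta\equiv\theta_k$; the recursion (\ref{omegavmetd}) is the Euler discretization of the stable scalar ODE $\dot\omega(t)=\omega^\ast(\theta)-\omega(t)$ with $\omega^\ast(\theta)=\lim_t\mathbb{E}_\mu[F_t\rho_t\delta_t\mid\theta]$, whose unique globally asymptotically stable equilibrium is $\omega^\ast(\theta)$; hence $\omega_k-\omega^\ast(\theta_k)\to 0$ a.s. Substituting the fast-scale limit $\omega_k\approx\omega^\ast(\theta_k)$ into (\ref{thetavmetd}) and using the rewriting in the body (where $F_t\rho_t\delta_t-\omega^\ast(\theta)$ is mean-zero, so the update has mean field $\theta\mapsto b_{\text{VMETD}}-\mathbf{A}_{\text{VMETD}}\theta$), the slow recursion tracks $\dot\theta(t)=-\mathbf{A}_{\text{VMETD}}\theta(t)+b_{\text{VMETD}}$, whose unique equilibrium — once $\mathbf{A}_{\text{VMETD}}$ is nonsingular and $-\mathbf{A}_{\text{VMETD}}$ is Hurwitz — is $\mathbf{A}_{\text{VMETD}}^{-1}b_{\text{VMETD}}$. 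The martingale–difference noise on both scales has bounded second moments by the i.i.d. assumption and the moment bound on $F_t$, so Borkar's hypotheses reduce to stability of the two ODEs.

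The main obstacle is the stability of the slow ODE, i.e. showing every eigenvalue of $\mathbf{A}_{\text{VMETD}}$ has positive real part. Unlike the VMTD case in Theorem \ref{theorem1}, where $\mathbf{A}=\mathrm{Cov}(\phi,\phi-\gamma\phi')$ was positive definite because it is a positively weighted sum of two symmetric PSD covariance matrices, here $\mathbf{A}_{\text{VMETD}}=\bm{\Phi}^\top\mathbf{F}(\mathbf{I}-\gamma\mathbf{P}_\pi)\bm{\Phi}-\bm{\Phi}^\top d_\mu d_\mu^\top\bm{\Phi}=\mathbf{A}_{\text{ETD}}-\bm{\Phi}^\top d_\mu d_\mu^\top\bm{\Phi}$ (the rank-one correction arising because $f^\top(\mathbf{I}-\gamma\mathbf{P}_\pi)=d_\mu^\top$), and $\mathbf{A}_{\text{ETD}}$ is not symmetric. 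I would proceed in two steps: first invoke the known positive–definiteness of the ETD key matrix $\mathbf{A}_{\text{ETD}}$ from the emphatic–TD literature \citep{sutton2016emphatic}, giving $x^\top\mathbf{A}_{\text{ETD}}x>0$ for $x\neq 0$; then control the rank-one centering correction by exhibiting $\mathbf{F}-d_\mu f^\top$ as a centered emphatic weighting and reproducing, in the $f$-weighted geometry, the covariance-style identity that made VMTD work, so that $x^\top\mathbf{A}_{\text{VMETD}}x>0$ off the (one–dimensional, nonsingularity-excluded) direction associated with the constant-feature component. A secondary technical point is a.s. boundedness of $(\theta_k,\omega_k,F_t)$, which I would obtain from a Borkar–Meyn–type stability test applied to the scaled ODE $\dot\theta=-\mathbf{A}_{\text{VMETD}}\theta$ together with the uniform moment bound on $F_t$. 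The remaining details parallel the TDC/ETD convergence arguments and are deferred to the appendix.
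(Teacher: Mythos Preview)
Your two--time--scale skeleton matches the paper exactly: $\omega$ is fast with equilibrium $\omega^\ast(\theta)=\lim_t\mathbb{E}_\mu[F_t\rho_t\delta_t\mid\theta]$, and the slow ODE is $\dot\theta=-\mathbf{A}_{\text{VMETD}}\theta+b_{\text{VMETD}}$. The only substantive step is the stability of that slow ODE, and here your plan has a gap.

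You propose to start from the known positive definiteness of $\mathbf{A}_{\text{ETD}}$ and then ``control the rank-one centering correction'' by reproducing the VMTD covariance identity in an $f$-weighted geometry. Two things go wrong. First, the correction is not a covariance centering in any single measure: the product $\Phi^\top d_\mu f^\top(\mathbf{I}-\gamma\mathbf{P}_\pi)\Phi=\Phi^\top d_\mu d_\mu^\top\Phi$ mixes a $d_\mu$-mean on one side with an $f$-weighting on the other, so there is no measure under which $\mathbf{A}_{\text{VMETD}}$ is literally $\mathrm{Cov}(\phi,\phi-\gamma\phi')$. Second, even if there were, the VMTD identity you want to port relied on $\mathrm{Cov}(\phi',\phi')=\mathrm{Cov}(\phi,\phi)$, i.e.\ on stationarity of the sampling distribution under the transition kernel; in the emphatic setting $\mathbf{P}_\pi^\top f=(f-d_\mu)/\gamma\neq f$, so the analogous cancellation fails. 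Without that, knowing only $x^\top\mathbf{A}_{\text{ETD}}x>0$ and subtracting the rank-one PSD term $(d_\mu^\top\Phi x)^2$ gives you nothing.

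The paper instead works directly on the inner matrix $\mathbf{X}=\mathbf{F}(\mathbf{I}-\gamma\mathbf{P}_\pi)-d_\mu d_\mu^\top$ and runs the Sutton--Mahmood--White diagonal-dominance argument on $\mathbf{X}$ itself rather than on $\mathbf{F}(\mathbf{I}-\gamma\mathbf{P}_\pi)$ as a black box: the row sums are $\mathbf{X}e=(1-\gamma)f-d_\mu=(1-\gamma)\sum_{t\ge 1}(\gamma\mathbf{P}_\pi^\top)^t d_\mu>0$, the column sums are $e^\top\mathbf{X}=d_\mu^\top-d_\mu^\top=0$, and the off-diagonal entries are negative (both $-\gamma f(i)[\mathbf{P}_\pi]_{ij}$ and $-d_\mu(i)d_\mu(j)$ are nonpositive). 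Hence $\mathbf{X}+\mathbf{X}^\top$ is strictly diagonally dominant with positive diagonal, so $\mathbf{A}_{\text{VMETD}}=\Phi^\top\mathbf{X}\Phi$ is positive definite. This is the step you should substitute for your covariance heuristic; the rest of your outline then goes through.
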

\begin{proof}
 The proof of VMETD's convergence is  based on Borkar's Theorem   for
 general stochastic approximation recursions with two time scales
 \cite{borkar1997stochastic}. 

 A sketch proof is given as follows.
 In the fast time scale, the parameter $\omega$ converges to
  $\mathbb{E}_{\mu}[F\rho\delta|\theta_k]$.
Recursion (\ref{thetavmetd}) is considered the slower timescale. 
If the key matrix 
$\textbf{A}_{\text{VMETD}}$ is positive definite, then 
$\theta$ converges.

An ${\Phi}^{\top}{\text{X}}{\Phi}$ matrix of this
 form will be positive definite whenever the matrix ${\text{X}}$ is positive definite.
 Any matrix ${\text{X}}$ is positive definite if and only if
 the symmetric matrix ${\text{S}}={\text{X}}+{\text{X}}^{\top}$ is positive definite. 
 Any symmetric real matrix ${\text{S}}$ is positive definite if the absolute values of
 its diagonal entries are greater than the sum of the absolute values of the corresponding
 off-diagonal entries\cite{sutton2016emphatic}. 
\begin{equation}
 \label{rowsum}
 \begin{split}
 &(\textbf{F} (\textbf{I} - \gamma \textbf{P}_{\pi})-{d}_{\mu} {d}_{\mu}^{\top} )e\\
 &=\textbf{F} (\textbf{I} - \gamma \textbf{P}_{\pi})e-{d}_{\mu} {d}_{\mu}^{\top} e\\
 % &=\textbf{F}(\textbfe-\gamma \textbf{P}_{\pi} \textbfe)-\textbf{d}_{\mu} \textbf{d}_{\mu}^{\top} \textbfe\\
 &=(1-\gamma)\textbf{F}e-{d}_{\mu} {d}_{\mu}^{\top} e\\
 % &=(1-\gamma)\textbf{f}-\textbf{d}_{\mu} \textbf{d}_{\mu}^{\top} \textbfe\\
 &=(1-\gamma){f}-{d}_{\mu} \\
 &=(1-\gamma)(\textbf{I}-\gamma\textbf{P}_{\pi}^{\top})^{-1}{d}_{\mu}-{d}_{\mu} \\
 &=(1-\gamma)[(\textbf{I}-\gamma\textbf{P}_{\pi}^{\top})^{-1}-\textbf{I}]{d}_{\mu} \\
 &=(1-\gamma)[\sum_{t=0}^{\infty}(\gamma\textbf{P}_{\pi}^{\top})^{t}-\textbf{I}]{d}_{\mu} \\
 &=(1-\gamma)[\sum_{t=1}^{\infty}(\gamma\textbf{P}_{\pi}^{\top})^{t}]{d}_{\mu} > 0, \\
 \end{split}
 \end{equation}
\begin{equation}
 \label{columnsum}
 \begin{split}
 &e^{\top}(\textbf{F} (\textbf{I} - \gamma \textbf{P}_{\pi})-\textbf{d}_{\mu} {d}_{\mu}^{\top} )\\
 &=e^{\top}\textbf{F} (\textbf{I} - \gamma \textbf{P}_{\pi})-e^{\top}{d}_{\mu} {d}_{\mu}^{\top} \\
 &={d}_{\mu}^{\top}-e^{\top}{d}_{\mu} {d}_{\mu}^{\top} \\
 &={d}_{\mu}^{\top}- {d}_{\mu}^{\top} \\
 &=0,
 \end{split}
\end{equation}
where $e$ is the all-ones vector.
(\ref{rowsum}) and (\ref{columnsum}) show that the matrix $\textbf{F} (\textbf{I} - \gamma \textbf{P}_{\pi})-d_{\mu} d_{\mu}^{\top}$ of
 diagonal entries are positive and its off-diagonal entries are negative. So each row sum plus the corresponding column sum is positive. 
So $\textbf{A}_{\text{VMETD}}$ is positive definite.
\end{proof}

\subsection{On the Fixed-point Solutions}
The fixed-point solutions of 
VMTD, VMTDC, and VMETD are $\textbf{A}^{-1}b$, $\textbf{A}^{-1}b$, 
and $\textbf{A}_{\text{VMETD}}^{-1}b_{\text{VMETD}}$, 
respectively,
 which differ from the traditional TD fixed-point solutions. 
 Therefore, this paper is concerned with the impact of these 
 three VM algorithms' solutions on the policy, specifically the policy 
 invariance of the VM algorithms. Before proving the policy invariance 
 of the algorithms, we will first discuss reward shaping.

Reward shaping can significantly speed up learning  by adding a shaping
reward $F(s,s')$ to the original reward  $r$, 
where $F(s,s')$ is the general form of any state-based shaping reward.
Static potential-based reward shaping (Static PBRS) maintains the policy invariance if the
shaping reward follows from $F(s,s')=\gamma
f(s')-f(s)$ \cite{ng1999policy}.

This means that we can make changes to the TD error $\delta = r+\gamma {\theta}^{\top}{\phi}'-{\theta}^{\top}{\phi} $ while still ensuring the invariance of the optimal policy,
\begin{equation*}
 \delta - \omega= r+\gamma {\theta}^{\top}{\phi}'-{\theta}^{\top}{\phi} - \omega,
\end{equation*}
where $\omega$ is a constant, acting as a static PBRS. 
However, selecting an appropriate 
$\omega$ requires expert knowledge. This forces us to learn 
$\omega$ dynamically, i.e., $\omega=\omega_t $ and dynamic PBRS can maintain the policy 
invariance if the shaping reward is $F(s,t,s',t')=\gamma f(s',t')-f(s,t)$,
where $t$ is the time-step the agent reaches in  state $s$
\cite{devlin2012dynamic}.
However, this result requires the convergence guarantee of the dynamic potential
function $f(s,t)$. If $f(s,t)$ does not converge as the time-step
$t\rightarrow\infty$, the Q-values of dynamic PBRS are not 
guaranteed to converge.

\begin{theorem} (Optimal policy invariance of VMTD).
 Consider the iterations (\ref{omega}) and (\ref{theta}) with (\ref{delta})  of VMTD. 
 The VMTD algorithm maintains
 the optimal policy invariance.
 \end{theorem}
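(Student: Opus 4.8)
The plan is to recognize the VMTD $\theta$-update as ordinary TD(0) applied to a reward-reshaped MDP, and then to invoke the potential-based reward shaping (PBRS) invariance results quoted just above the theorem. Since $\delta_t-\omega_t=(r_{t+1}-\omega_t)+\gamma\theta_t^{\top}\phi_{t+1}-\theta_t^{\top}\phi_t$, the increment $(\delta_t-\omega_t)\phi_t$ in (\ref{theta}) is exactly the TD(0) increment for the MDP with the same dynamics but with reward $r-\omega_t$; equivalently, VMTD is TD(0) run with the state-independent shaping reward $F_t(s,s')=-\omega_t$. So the whole question reduces to: does this particular (constant, slowly-varying) shaping change the optimal policy?

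First I would check that the constant shaping is potential-based. Taking the state-independent, time-indexed potential $f_t(s)\equiv\omega_t/(1-\gamma)$ gives $\gamma f_t(s')-f_t(s)=(\gamma-1)\,\omega_t/(1-\gamma)=-\omega_t$, so $-\omega_t$ has the dynamic PBRS form $F(s,t,s',t')=\gamma f(s',t')-f(s,t)$ (exactly when the potential is indexed by the decision epoch, and otherwise up to the vanishing term $\gamma(\omega_{t'}-\omega_t)/(1-\gamma)$). Next I would invoke the convergence theorem for VMTD, Theorem \ref{theorem1}: $\omega_t\to\omega^{*}:=\mathbb{E}[\delta\mid\theta^{*}]$ almost surely, with $\theta^{*}=\textbf{A}^{-1}b$. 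Hence the dynamic potential $f_t$ converges, which is precisely the hypothesis required by \citet{devlin2012dynamic}; in the limit the shaping becomes the static PBRS of \citet{ng1999policy} with the constant potential $f(s)\equiv\omega^{*}/(1-\gamma)$. By Ng et al.'s theorem a potential-based shaping term leaves the set of optimal policies unchanged, so the greedy policy induced by the value function that VMTD converges to coincides with that of the unshaped MDP — the claimed optimal policy invariance.

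It is worth recording the algebraic shadow of this at the fixed point: $\textbf{A}_{\text{VMTD}}$ does not involve the reward at all, and $b_{\text{VMTD}}=\bm{\Phi}^{\top}(\textbf{D}_{\pi}-d_{\pi}d_{\pi}^{\top})r_{\pi}$ is unchanged under $r_{\pi}\mapsto r_{\pi}+c\,e$ ($e$ the all-ones vector), because $(\textbf{D}_{\pi}-d_{\pi}d_{\pi}^{\top})e=d_{\pi}-d_{\pi}=0$; thus $\theta^{*}=\textbf{A}^{-1}b$ is literally invariant under adding a constant to every reward, consistent with the PBRS picture. The main obstacle is the time-varying character of $\omega_t$: Ng et al.'s invariance is stated for a fixed potential, so the argument must pass through the dynamic-PBRS extension and therefore genuinely relies on the convergence of $\omega_t$ furnished by Theorem \ref{theorem1} (if $\omega_t$ did not converge, the shaped value estimates would not be guaranteed to converge and the invariance claim would fail). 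A secondary point to handle with care is the exact meaning of ``optimal policy invariance'' under function approximation: it is immediate in the tabular case from \citet{ng1999policy}, and under linear approximation it is underpinned by the constant-shift invariance of $b_{\text{VMTD}}$ noted above, so the learned $V_{\theta}$ — and hence any greedy policy derived from it — is unaffected.
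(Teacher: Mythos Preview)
Your proposal is correct and follows essentially the same route as the paper: define a state-independent potential $f_t(s)=\omega_t/(1-\gamma)$ (the paper writes it as $\omega_t/(\gamma-1)$, a harmless sign flip), observe that the subtracted $\omega_t$ is then a dynamic PBRS term, and invoke the convergence of $\omega_t$ from Theorem~\ref{theorem1} to meet the hypothesis of \citet{devlin2012dynamic}. Your additional algebraic remark that $(\textbf{D}_{\pi}-d_{\pi}d_{\pi}^{\top})e=0$ makes $b_{\text{VMTD}}$ (and hence $\theta^{*}$) invariant under constant reward shifts is a nice sanity check not present in the paper's proof.
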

 \begin{proof}
 Consider the parameter variable
 \begin{equation*}
 \omega_t  
 =\gamma (\frac{\omega_t}{\gamma-1})-(\frac{\omega_t}{\gamma-1}).
 \end{equation*}
 Let $\eta (s)\equiv \frac{1}{\gamma-1}$,
 consequently, 
 \begin{equation*}
 f_{\omega_t}(s, t)=\omega_t^{\top}\zeta(s)=\frac{\omega_t}{\gamma-1}
 \end{equation*}
 
 Thus, $F_{\omega_t}(s,s')=\gamma f_{\omega_t}(s')-f_{\omega_t}(s)= \omega_t$
 is a dynamic potential-based function. In VMTD, $\omega_t$ has been proven to be convergent.
\end{proof}

Similarly, both VMTDC and VMETD can ensure optimal policy invariance.
\begin{figure*}[tb]
     \vskip 0.2in
     \begin{center}
     \subfigure[on-policy 2-state]{
      \includegraphics[width=0.65\columnwidth, height=0.58\columnwidth]{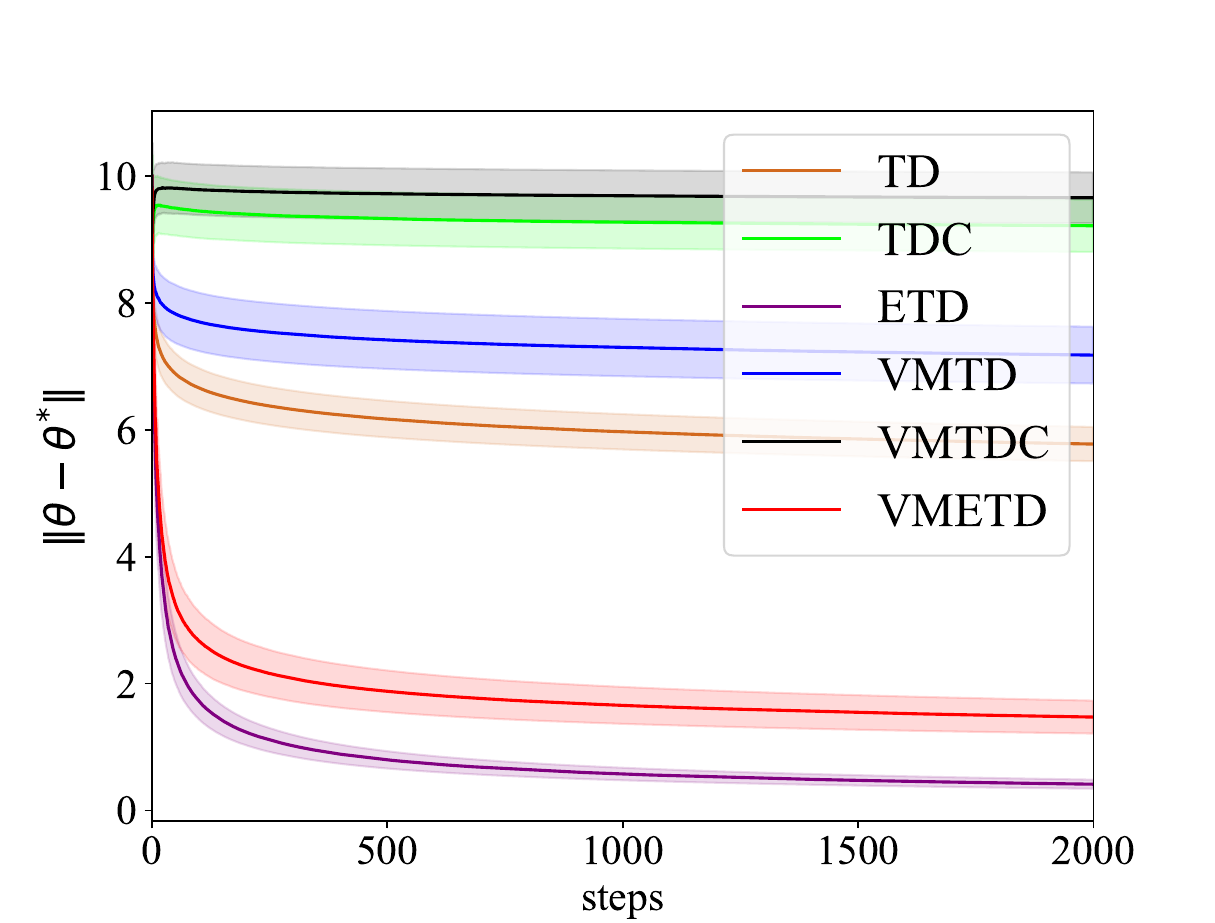}
      \label{2-state}
     }
     \subfigure[off-policy 2-state]{
      \includegraphics[width=0.65\columnwidth, height=0.58\columnwidth]{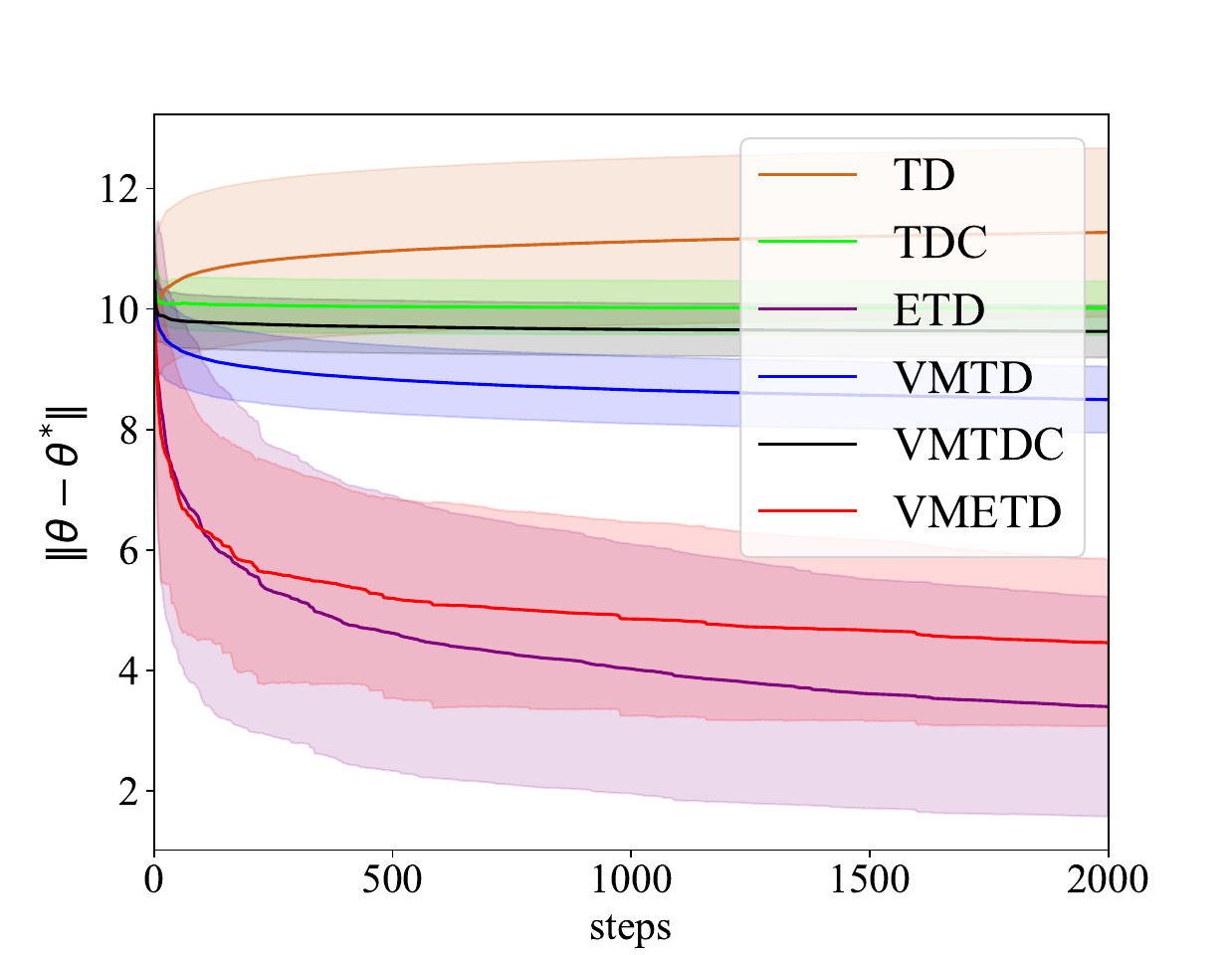}
      \label{7-state}
     }
     \subfigure[Maze]{
      \includegraphics[width=0.65\columnwidth, height=0.58\columnwidth]{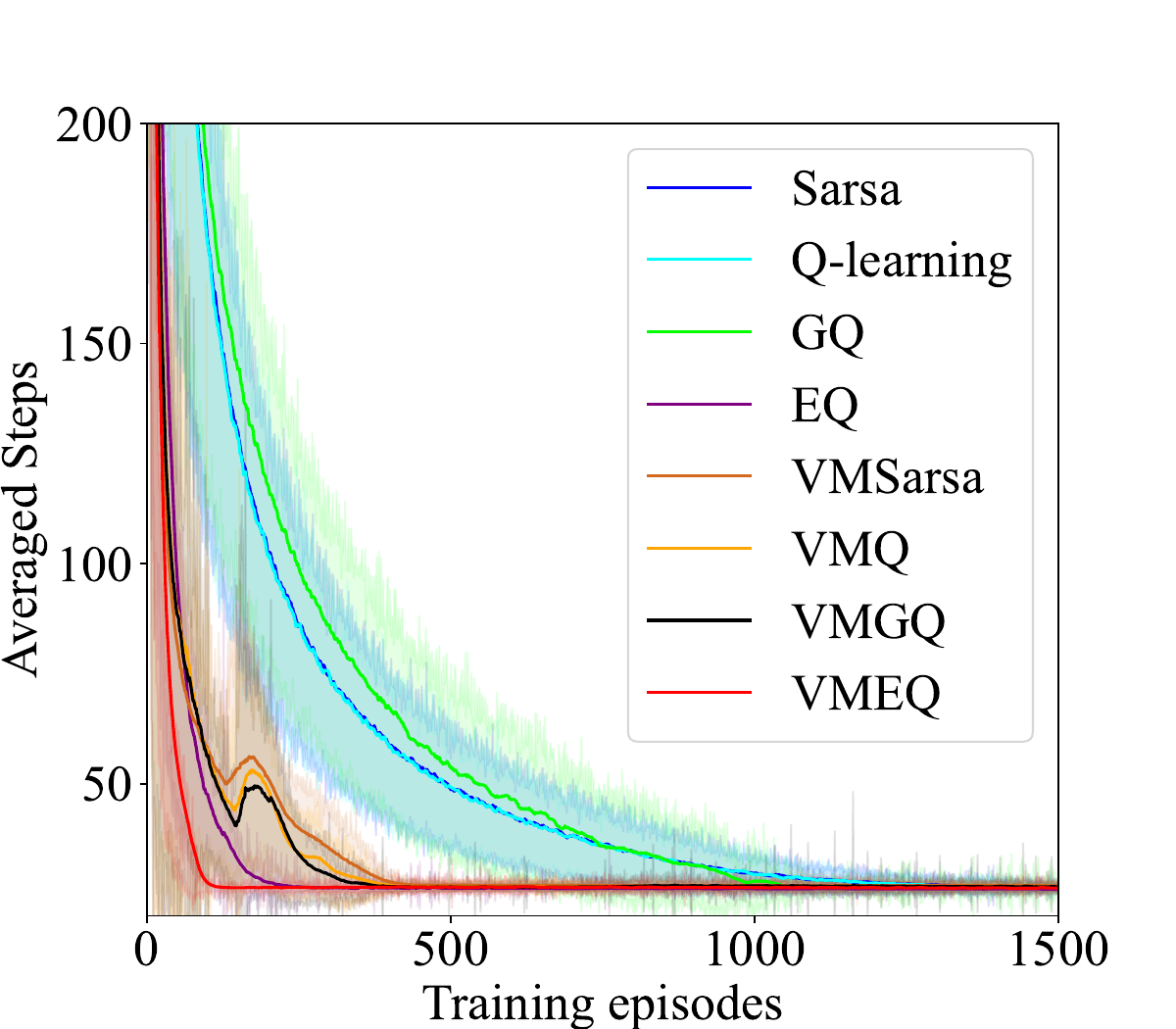}
      \label{MazeFull}
     }\\
     \subfigure[Cliff Walking]{
      \includegraphics[width=0.65\columnwidth, height=0.58\columnwidth]{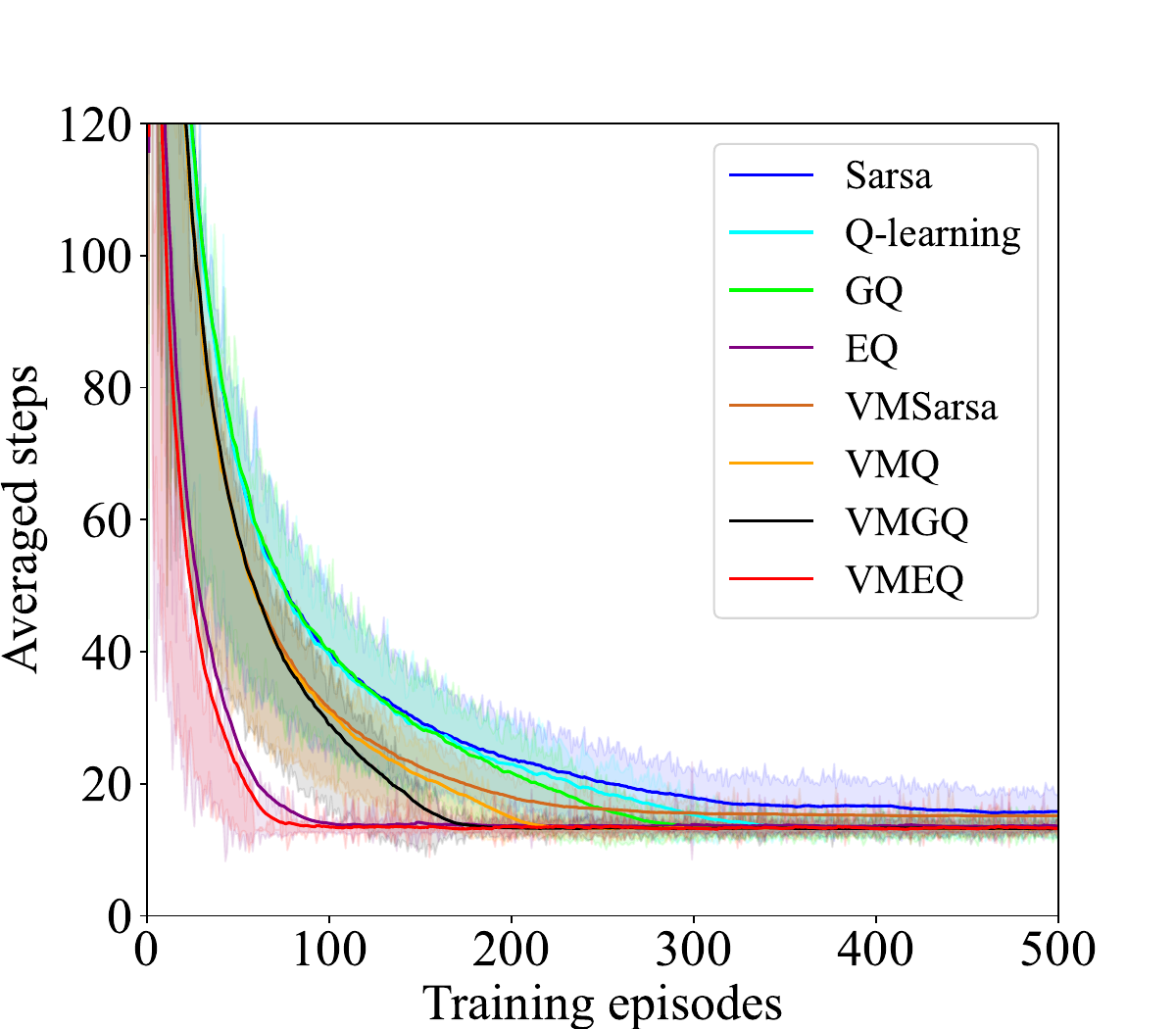}
      \label{CliffWalkingFull}
     }
     \subfigure[Mountain Car]{
      \includegraphics[width=0.65\columnwidth, height=0.58\columnwidth]{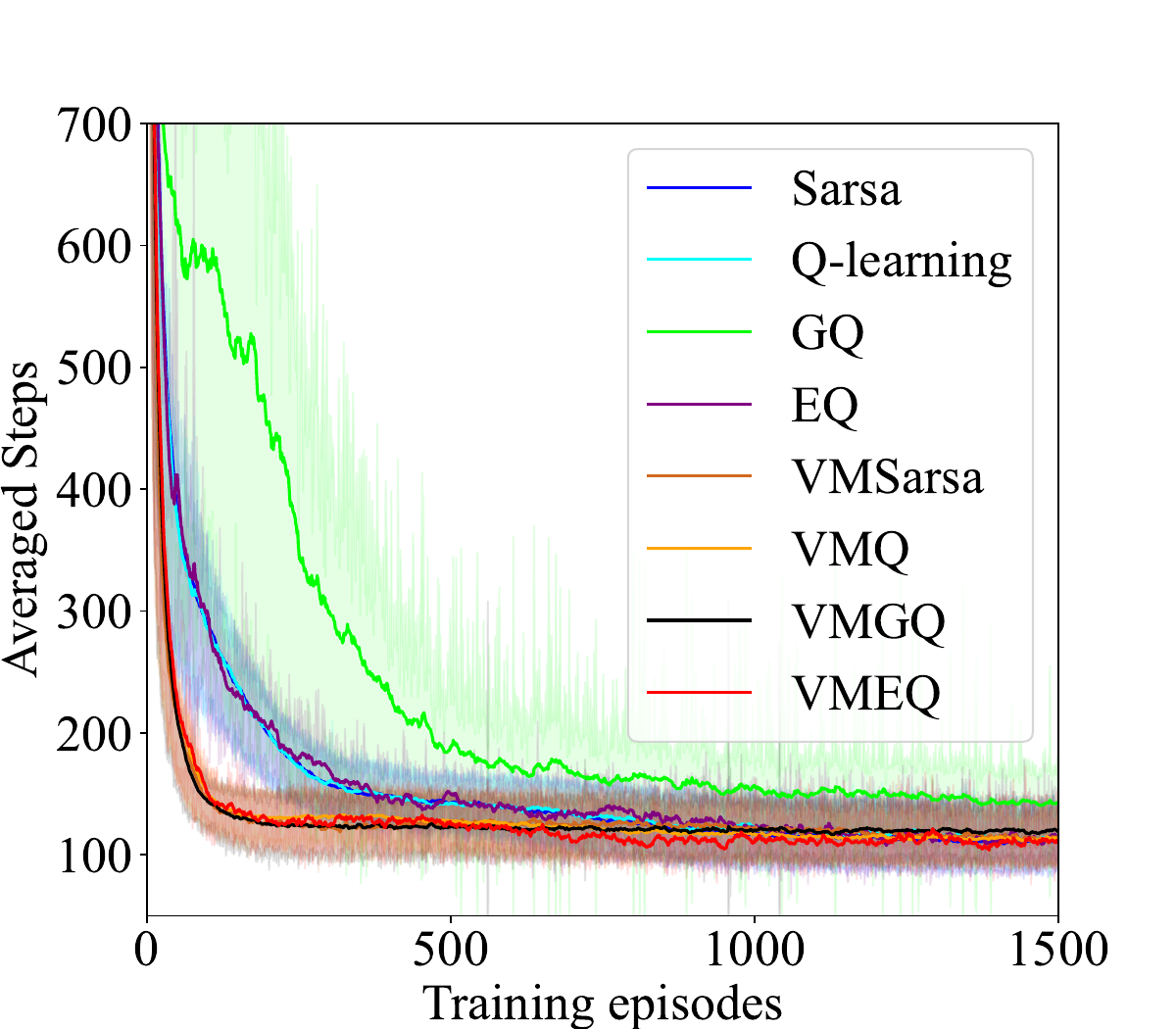}
      \label{MountainCarFull}
     }
     \subfigure[Acrobot]{
      \includegraphics[width=0.65\columnwidth, height=0.58\columnwidth]{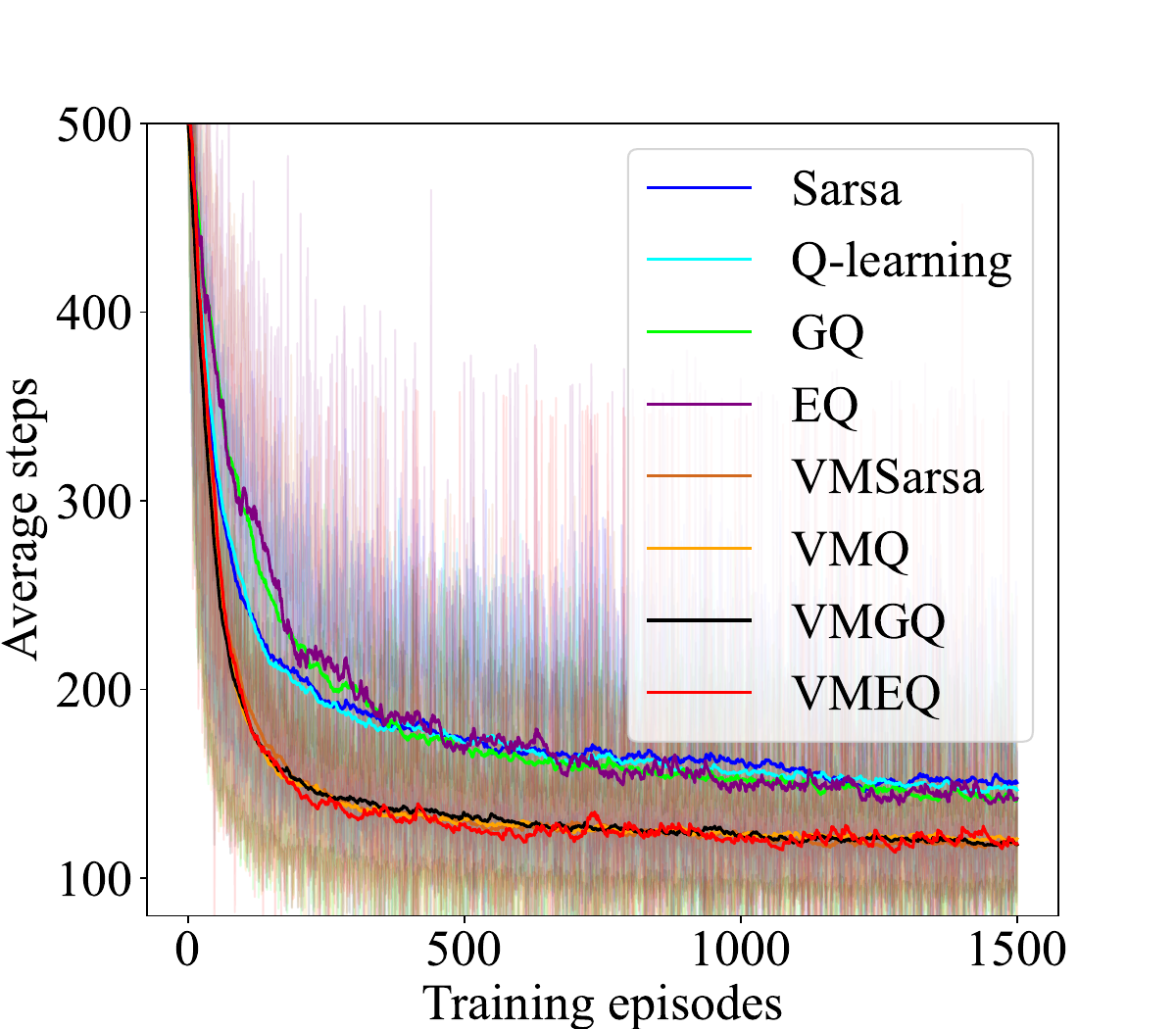}
      \label{AcrobotFull}
     }
      \caption{Learning curses of one evaluation environment and four control environments.}
      \label{Complete_full}
     \end{center}
     \vskip -0.2in
      \end{figure*}
    \section{Experimental Studies}
    This section assesses algorithm performance through experiments, 
    which are divided into policy evaluation experiments and control experiments.
    The evaluation experimental environment is the 2-state. 
    In a 2-state environment, we conducted two types of experiments—on-policy 
    and off-policy—to verify the relationship between the convergence speed of 
    the algorithm and the smallest eigenvalue of the key matrix $(\textbf{A}+\textbf{A}^{\top})/2$.
    Control experiments, by allowing the algorithm to interact 
    with the environment to optimize the policy, can evaluate its 
    performance in learning the optimal policy. This provides a more 
    comprehensive assessment of the algorithm's overall capabilities.
%     The control experimental environments are Maze, CliffWalking-v0, MountainCar-v0, and Acrobot-v1.
    To verify the optimal policy invariance of variance minimization, we needed to eliminate the 
    influence of function approximation. Therefore, we introduced control experiments with tabular 
    value functions in the Maze and CliffWalking environments. 
    To further observe the effects of variance minimization, we introduced experiments with function approximation in the Mountain Car and Acrobot environments.
    The control algorithms for TDC, ETD, VMTDC, and VMETD are named GQ, EQ, VMGQ, and VMEQ, respectively.
    For TD and VMTD control algorithms, there are two variants each: Sarsa and Q-learning for TD, and VMSarsa and VMQ for VMTD.
    
    \subsection{Testing Tasks}
    
    \textbf{Maze}:  The learning agent should find the shortest path from the upper
    left corner to the lower right corner. 
    \begin{minipage}{0.2\textwidth}
     In each state,
     there are four alternative actions: $up$, $down$, $left$, and $right$, which
     takes the agent deterministically to the corresponding neighbor state,
     except when a movement is blocked by an obstacle or the edge
     of the maze. 
    \end{minipage}
    \begin{minipage}{0.25\textwidth}
      \centering
      \includegraphics[scale=0.25]{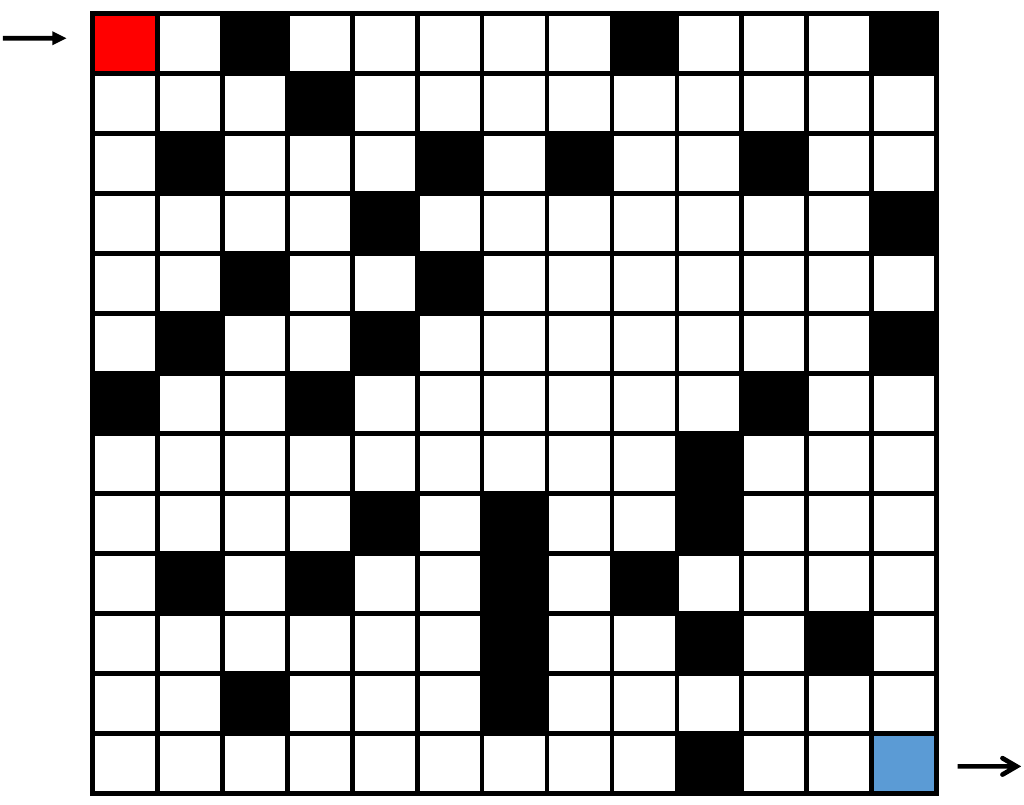}
    \end{minipage}
    Rewards are $-1$ in all transitions until the
    agent reaches the goal state.
    The discount factor $\gamma=0.99$, and states $s$ are represented by tabular
    features. The maximum number of moves in the game is set to 1000.
    
    \textbf{The other three control environments}: Cliff Walking, Mountain Car, and Acrobot are 
    selected from the gym's official website and correspond to the following 
    versions: ``CliffWalking-v0'', ``MountainCar-v0'' and ``Acrobot-v1''. 
    For specific details, please refer to the gym's official website.
    The maximum number of steps for the MountainCar environment is set to 1000, 
    while the default settings are used for the other two environments. In  MountainCar and Acrobot, features are generated by tile coding.
    
    For all policy evaluation experiments, each experiment 
    is independently run 100 times.
    For all control experiments, each experiment is independently run 50 times.
    For specific experimental parameters, please refer to the appendix.
    
    \subsection{Experimental Results and Analysis}
    Figure \ref{2-state} shows the learning curves for the on-policy 
    2-state policy evaluation experiment. In this setup, 
    the convergence speed of ETD, VMETD, TD, VMTD, TDC, and VMTDC decreases 
    sequentially. Table \ref{tab:min_eigenvalues} indicates that the smallest eigenvalue 
    of the key matrix for these four algorithms is greater than 0 
    and decreases sequentially, which is consistent with the 
    experimental curves and table values.
    
    Figure \ref{7-state} displays the learning curves for the off-policy 
    2-state policy evaluation experiment. 
    The convergence speed of ETD, VMETD, VMTD, VMTDC, and 
    TDC decreases sequentially, while TD diverges. Table \ref{tab:min_eigenvalues} 
    shows that the smallest eigenvalue of the key matrix for 
    ETD, VMETD, VMTD, VMTDC, and TDC are greater than 0 and 
    decreases sequentially, while the smallest eigenvalue 
    for TD is less than 0. This is consistent with the 
    experimental curves and table values. Although VMTD is guaranteed to converge under 
    on-policy conditions, it still converges in the 
    off-policy 2-state scenario. The update formula 
    of VMTD indicates that it is essentially an 
    adjustment and correction of the TD update, 
    with the introduction of the parameter $\omega$ 
    making the variance of the gradient estimate 
    more stable, thereby making the update of theta more stable.
    
    Figures \ref{MazeFull}, \ref{CliffWalkingFull}, \ref{MountainCarFull} and \ref{AcrobotFull} show the learning curves 
    for control experiments. 
    As shown in Figures \ref{MazeFull} and \ref{CliffWalkingFull}, all algorithms converge to the optimal policy, indicating that the VM algorithm satisfies the optimal policy invariance.
    
    A common feature 
    observed across these experiments is that VMEQ 
    outperforms EQ, VMGQ outperforms GQ, VMQ outperforms 
    Q-learning and VMSarsa outperforms Sarsa. For the 
    Maze and CliffWalking experiments, VMEQ demonstrated 
    the best performance with the fastest convergence speed. 
    In the MountainCar and Acrobot experiments, the performance 
    of the four VM algorithms was nearly identical and all 
    outperformed the other algorithms.
    
%     Overall, whether in policy evaluation experiments or 
%     control experiments, the VM algorithms have 
%     demonstrated superior performance, 
%     especially excelling in the control experiments.
\section{Conclusion and Future Work}
Value-based RL typically aims 
to minimize error as an optimization objective. 
As an alternation, this study proposes two new objective 
functions: VBE and VPBE, and derives an on-policy algorithm: 
VMTD and two off-policy algorithms: VMTDC and VMETD. 
All algorithms demonstrated superior performance in policy 
evaluation and control experiments.
Both algorithms demonstrated superior performance in policy 
evaluation and control experiments.
Future work include
(1) apply the VM approach to temporal difference learning algorithms, e.g., HTD, Vtrace, Proximal GTD2, TDRC, Tree Backup, and ABTD \cite{ghiassian2024off}.
(2) extensions of VBE and VPBE to multi-step returns. 
(3) extensions to nonlinear approximations.

\bibliography{aaai25}
\onecolumn
\appendix
\section{Relevant proofs}
\subsection{Proof of Theorem 2}
\label{proofth1}
\begin{proof}
\label{th1proof}   
 The proof is based on Borkar's Theorem for
 general stochastic approximation recursions with two time scales \cite{borkar1997stochastic}. 
 A new one-step linear TD solution is defined as: 
\begin{equation*}
0=\mathbb{E}[(\delta-\mathbb{E}[\delta]) \phi]=-A\theta+b.
\end{equation*}
Thus, the VMTD's solution is $\theta_{\text{VMTD}}=A^{-1}b$. First, note that recursion (\ref{theta}) can be rewritten as
\begin{equation*}
\theta_{k+1}\leftarrow \theta_k+\beta_k\xi(k),
\end{equation*}
where
\begin{equation*}
\xi(k)=\frac{\alpha_k}{\beta_k}(\delta_k-\omega_k)\phi_k
\end{equation*}
Due to the settings of step-size schedule $\alpha_k = o(\beta_k)$,
$\xi(k)\rightarrow 0$ almost surely as $k\rightarrow\infty$. 
 That is the increments in iteration (\ref{omega}) are uniformly larger than
 those in (\ref{theta}), thus (\ref{omega}) is the faster recursion.
 Along the faster time scale, iterations of (\ref{omega}) and (\ref{theta})
 are associated with the ODEs system as follows:
\begin{equation}
 \dot{\theta}(t) = 0,
\label{thetaFast}
\end{equation}
\begin{equation}
 \dot{\omega}(t)=\mathbb{E}[\delta_t|\theta(t)]-\omega(t).
\label{omegaFast}
\end{equation}
 Based on the ODE (\ref{thetaFast}), $\theta(t)\equiv \theta$ when
 viewed from the faster timescale. 
 By the Hirsch lemma \cite{hirsch1989convergent}, it follows that
$||\theta_k-\theta||\rightarrow 0$ a.s. as $k\rightarrow \infty$ for some
$\theta$ that depends on the initial condition $\theta_0$ of recursion
 (\ref{theta}).
 Thus, the ODE pair (\ref{thetaFast})-(\ref{omegaFast}) can be written as
\begin{equation}
 \dot{\omega}(t)=\mathbb{E}[\delta_t|\theta]-\omega(t).
\label{omegaFastFinal}
\end{equation}
 Consider the function $h(\omega)=\mathbb{E}[\delta|\theta]-\omega$,
 i.e., the driving vector field of the ODE (\ref{omegaFastFinal}).
 It is easy to find that the function $h$ is Lipschitz with coefficient
$-1$.
 Let $h_{\infty}(\cdot)$ be the function defined by
$h_{\infty}(\omega)=\lim_{x\rightarrow \infty}\frac{h(x\omega)}{x}$.
 Then  $h_{\infty}(\omega)= -\omega$,  is well-defined. 
 For (\ref{omegaFastFinal}), $\omega^*=\mathbb{E}[\delta|\theta]$
 is the unique globally asymptotically stable equilibrium.
 For the ODE
  \begin{equation}
 \dot{\omega}(t) = h_{\infty}(\omega(t))= -\omega(t),
 \label{omegaInfty}
 \end{equation}
 apply $\vec{V}(\omega)=(-\omega)^{\top}(-\omega)/2$ as its
 associated strict Liapunov function. Then,
 the origin of (\ref{omegaInfty}) is a globally asymptotically stable
 equilibrium. Consider now the recursion (\ref{omega}).
 Let $M_{k+1}=(\delta_k-\omega_k)
 -\mathbb{E}[(\delta_k-\omega_k)|\mathcal{F}(k)]$,
 where $\mathcal{F}(k)=\sigma(\omega_l,\theta_l,l\leq k;\phi_s,\phi_s',r_s,s<k)$, $k\geq 1$ are the sigma fields
 generated by $\omega_0,\theta_0,\omega_{l+1},\theta_{l+1},\phi_l,\phi_l'$, $0\leq l<k$.
 It is easy to verify that $M_{k+1},k\geq0$ are integrable random variables that 
 satisfy $\mathbb{E}[M_{k+1}|\mathcal{F}(k)]=0$, $\forall k\geq0$.
 Because $\phi_k$, $r_k$, and $\phi_k'$ have
 uniformly bounded second moments, it can be seen that for some constant $c_1>0$, $\forall k\geq0$,
\begin{equation*}
 \mathbb{E}[||M_{k+1}||^2|\mathcal{F}(k)]\leq
 c_1(1+||\omega_k||^2+||\theta_k||^2).
\end{equation*}
Now Assumptions (A1) and (A2) of \cite{borkar2000ode} are verified.
 Furthermore, Assumptions (TS) of \cite{borkar2000ode} are satisfied by our
 conditions on the step-size sequences $\alpha_k$, $\beta_k$. Thus,
 by Theorem 2.2 of \cite{borkar2000ode} we obtain that $||\omega_k-\omega^*||\rightarrow 0$ almost surely as $k\rightarrow \infty$.
Consider now the slower time scale recursion (\ref{theta}).
 Based on the above analysis, (\ref{theta}) can be rewritten as 
\begin{equation*}
\theta_{k+1}\leftarrow
\theta_{k}+\alpha_k(\delta_k-\mathbb{E}[\delta_k|\theta_k])\phi_k.
\end{equation*}
Let $\mathcal{G}(k)=\sigma(\theta_l,l\leq k;\phi_s,\phi_s',r_s,s<k)$, 
$k\geq 1$ be the sigma fields
 generated by $\theta_0,\theta_{l+1},\phi_l,\phi_l'$,
$0\leq l<k$.
 Let $Z_{k+1} = Y_{t}-\mathbb{E}[Y_{t}|\mathcal{G}(k)]$,
 where
\begin{equation*}
 Y_{k}=(\delta_k-\mathbb{E}[\delta_k|\theta_k])\phi_k.
\end{equation*}
 Consequently,
\begin{equation*}
\begin{array}{ccl}
 \mathbb{E}[Y_t|\mathcal{G}(k)]&=&\mathbb{E}[(\delta_k-\mathbb{E}[\delta_k|\theta_k])\phi_k|\mathcal{G}(k)]\\
&=&\mathbb{E}[\delta_k\phi_k|\theta_k]
 -\mathbb{E}[\mathbb{E}[\delta_k|\theta_k]\phi_k]\\
&=&\mathbb{E}[\delta_k\phi_k|\theta_k]
 -\mathbb{E}[\delta_k|\theta_k]\mathbb{E}[\phi_k]\\
&=&\mathrm{Cov}(\delta_k|\theta_k,\phi_k),
\end{array}
\end{equation*}
 where $\mathrm{Cov}(\cdot,\cdot)$ is a covariance operator.
Thus,
 \begin{equation*}
\begin{array}{ccl}
 Z_{k+1}&=&(\delta_k-\mathbb{E}[\delta_k|\theta_k])\phi_k-\mathrm{Cov}(\delta_k|\theta_k,\phi_k).
\end{array}
\end{equation*}
 It is easy to verify that $Z_{k+1},k\geq0$ are integrable random variables that 
 satisfy $\mathbb{E}[Z_{k+1}|\mathcal{G}(k)]=0$, $\forall k\geq0$.
 Also, because $\phi_k$, $r_k$, and $\phi_k'$ have
 uniformly bounded second moments, it can be seen that for some constant
$c_2>0$, $\forall k\geq0$,
\begin{equation*}
 \mathbb{E}[||Z_{k+1}||^2|\mathcal{G}(k)]\leq
 c_2(1+||\theta_k||^2).
\end{equation*}

 Consider now the following ODE associated with (\ref{theta}):
\begin{equation}
\begin{array}{ccl}
 \dot{\theta}(t)&=&\mathrm{Cov}(\delta|\theta(t),\phi)\\
&=&\mathrm{Cov}(r+(\gamma\phi'-\phi)^{\top}\theta(t),\phi)\\
&=&\mathrm{Cov}(r,\phi)-\mathrm{Cov}(\theta(t)^{\top}(\phi-\gamma\phi'),\phi)\\
&=&\mathrm{Cov}(r,\phi)-\theta(t)^{\top}\mathrm{Cov}(\phi-\gamma\phi',\phi)\\
&=&\mathrm{Cov}(r,\phi)-\mathrm{Cov}(\phi-\gamma\phi',\phi)^{\top}\theta(t)\\
&=&\mathrm{Cov}(r,\phi)-\mathrm{Cov}(\phi,\phi-\gamma\phi')\theta(t)\\
&=&-A\theta(t)+b.
\end{array}
\label{odetheta}
\end{equation}
 Let $\vec{h}(\theta(t))$ be the driving vector field of the ODE
 (\ref{odetheta}).
\begin{equation*}
 \vec{h}(\theta(t))=-A\theta(t)+b.
\end{equation*}
 Consider the cross-covariance matrix,
\begin{equation}
\begin{array}{ccl}
 A &=& \mathrm{Cov}(\phi,\phi-\gamma\phi')\\
  &=&\frac{\mathrm{Cov}(\phi,\phi)+\mathrm{Cov}(\phi-\gamma\phi',\phi-\gamma\phi')-\mathrm{Cov}(\gamma\phi',\gamma\phi')}{2}\\
  &=&\frac{\mathrm{Cov}(\phi,\phi)+\mathrm{Cov}(\phi-\gamma\phi',\phi-\gamma\phi')-\gamma^2\mathrm{Cov}(\phi',\phi')}{2}\\
  &=&\frac{(1-\gamma^2)\mathrm{Cov}(\phi,\phi)+\mathrm{Cov}(\phi-\gamma\phi',\phi-\gamma\phi')}{2},\\
\end{array}
\label{covariance}
\end{equation}
 where we eventually used $\mathrm{Cov}(\phi',\phi')=\mathrm{Cov}(\phi,\phi)$
\footnote{The covariance matrix $\mathrm{Cov}(\phi',\phi')$ is equal to
 the covariance matrix $\mathrm{Cov}(\phi,\phi)$ if the initial state is re-reachable or
 initialized randomly in a Markov chain for on-policy update.}.
 Note that the covariance matrix $\mathrm{Cov}(\phi,\phi)$ and
$\mathrm{Cov}(\phi-\gamma\phi',\phi-\gamma\phi')$ are semi-positive
 definite. Then, the matrix $A$ is semi-positive definite because  $A$ is
 linearly combined  by  two positive-weighted semi-positive definite matrice
 (\ref{covariance}).
 Furthermore, $A$ is nonsingular due to the assumption.
 Hence, the cross-covariance matrix $A$ is positive definite.

 Therefore,
$\theta^*=A^{-1}b$ can be seen to be the unique globally asymptotically
 stable equilibrium for ODE (\ref{odetheta}).
 Let $\vec{h}_{\infty}(\theta)=\lim_{r\rightarrow
\infty}\frac{\vec{h}(r\theta)}{r}$. Then
$\vec{h}_{\infty}(\theta)=-A\theta$ is well-defined. 
 Consider now the ODE
\begin{equation}
 \dot{\theta}(t)=-A\theta(t).
\label{odethetafinal}
\end{equation}
 The ODE (\ref{odethetafinal}) has the origin of its unique globally asymptotically stable equilibrium.
 Thus, the assumption (A1) and (A2) are verified.
    \end{proof}

\subsection{Proof of Theorem 3}
\label{proofth2}
\begin{proof}
The proof is similar to that given by \cite{sutton2009fast} for TDC, but it is based on multi-time-scale stochastic approximation.

For the VMTDC algorithm, a new one-step linear TD solution is defined as:
\begin{equation*}
    0=\mathbb{E}[({\phi} - \gamma {\phi}' - \mathbb{E}[{\phi} - \gamma {\phi}']){\phi}^\top]\mathbb{E}[{\phi} {\phi}^{\top}]^{-1}\mathbb{E}[(\delta -\mathbb{E}[\delta]){\phi}]=\textbf{A}^{\top}\textbf{C}^{-1}(-\textbf{A}{\theta}+{b}).
\end{equation*}
The matrix $\textbf{A}^{\top}\textbf{C}^{-1}\textbf{A}$ is positive definite. Thus, the  VMTD's solution is
${\theta}_{\text{VMTDC}}=\textbf{A}^{-1}{b}$.

First, note that recursion (\ref{thetavmtdc}) and (\ref{uvmtdc}) can be rewritten as, respectively, 
\begin{equation*}
 {\theta}_{k+1}\leftarrow {\theta}_k+\zeta_k {x}(k),
\end{equation*}
\begin{equation*}
 {u}_{k+1}\leftarrow {u}_k+\beta_k {y}(k),
\end{equation*}
where 
\begin{equation*}
 {x}(k)=\frac{\alpha_k}{\zeta_k}[(\delta_{k}- \omega_k) {\phi}_k - \gamma{\phi}'_{k}({\phi}^{\top}_k {u}_k)],
\end{equation*}
\begin{equation*}
 {y}(k)=\frac{\zeta_k}{\beta_k}[\delta_{k}-\omega_k - {\phi}^{\top}_k {u}_k]{\phi}_k.
\end{equation*}

Recursion (\ref{thetavmtdc}) can also be rewritten as
\begin{equation*}
 {\theta}_{k+1}\leftarrow {\theta}_k+\beta_k z(k),
\end{equation*}
where
\begin{equation*}
 z(k)=\frac{\alpha_k}{\beta_k}[(\delta_{k}- \omega_k) {\phi}_k - \gamma{\phi}'_{k}({\phi}^{\top}_k {u}_k)],
\end{equation*}

Due to the settings of the step-size schedule 
$\alpha_k = o(\zeta_k)$, $\zeta_k = o(\beta_k)$, ${x}(k)\rightarrow 0$, ${y}(k)\rightarrow 0$, $z(k)\rightarrow 0$ almost surely as $k\rightarrow 0$.
That is the increments in iteration (\ref{omegavmtdc}) are uniformly larger than
those in (\ref{uvmtdc}) and  the increments in iteration (\ref{uvmtdc}) are uniformly larger than
those in (\ref{thetavmtdc}), thus (\ref{omegavmtdc}) is the fastest recursion, (\ref{uvmtdc}) is the second fast recursion, and (\ref{thetavmtdc}) is the slower recursion.
Along the fastest time scale, iterations of (\ref{thetavmtdc}), (\ref{uvmtdc}) and (\ref{omegavmtdc})
are associated with the ODEs system as follows:
\begin{equation}
 \dot{{\theta}}(t) = 0,
    \label{thetavmtdcFastest}
\end{equation}
\begin{equation}
 \dot{{u}}(t) = 0,
    \label{uvmtdcFastest}
\end{equation}
\begin{equation}
 \dot{\omega}(t)=\mathbb{E}[\delta_t|{u}(t),{\theta}(t)]-\omega(t).
    \label{omegavmtdcFastest}
\end{equation}

Based on the ODE (\ref{thetavmtdcFastest}) and (\ref{uvmtdcFastest}), both ${\theta}(t)\equiv {\theta}$
and ${u}(t)\equiv {u}$ when viewed from the fastest timescale.
By the Hirsch lemma \cite{hirsch1989convergent}, it follows that
$||{\theta}_k-{\theta}||\rightarrow 0$ a.s. as $k\rightarrow \infty$ for some
${\theta}$ that depends on the initial condition ${\theta}_0$ of recursion
(\ref{thetavmtdc}) and $||{u}_k-{u}||\rightarrow 0$ a.s. as $k\rightarrow \infty$ for some
$u$ that depends on the initial condition $u_0$ of recursion
(\ref{uvmtdc}). Thus, the ODE pair (\ref{thetavmtdcFastest})-(ref{omegavmtdcFastest})
can be written as 
\begin{equation}
 \dot{\omega}(t)=\mathbb{E}[\delta_t|{u},{\theta}]-\omega(t).
    \label{omegavmtdcFastestFinal}
\end{equation}

Consider the function $h(\omega)=\mathbb{E}[\delta|{\theta},{u}]-\omega$,
i.e., the driving vector field of the ODE (\ref{omegavmtdcFastestFinal}).
It is easy to find that the function $h$ is Lipschitz with coefficient
$-1$.
Let $h_{\infty}(\cdot)$ be the function defined by
 $h_{\infty}(\omega)=\lim_{r\rightarrow \infty}\frac{h(r\omega)}{r}$.
 Then  $h_{\infty}(\omega)= -\omega$,  is well-defined. 
 For (\ref{omegavmtdcFastestFinal}), $\omega^*=\mathbb{E}[\delta|{\theta},{u}]$
is the unique globally asymptotically stable equilibrium.
For the ODE
\begin{equation}
 \dot{\omega}(t) = h_{\infty}(\omega(t))= -\omega(t),
 \label{omegavmtdcInfty}
\end{equation}
apply $\vec{V}(\omega)=(-\omega)^{\top}(-\omega)/2$ as its
associated strict Liapunov function. Then,
the origin of (\ref{omegavmtdcInfty}) is a globally asymptotically stable
equilibrium.

Consider now the recursion (\ref{omegavmtdc}).
Let
$M_{k+1}=(\delta_k-\omega_k)
-\mathbb{E}[(\delta_k-\omega_k)|\mathcal{F}(k)]$,
where $\mathcal{F}(k)=\sigma(\omega_l,{u}_l,{\theta}_l,l\leq k;{\phi}_s,{\phi}_s',r_s,s<k)$, 
$k\geq 1$ are the sigma fields
generated by $\omega_0,u_0,{\theta}_0,\omega_{l+1},{u}_{l+1},{\theta}_{l+1},{\phi}_l,{\phi}_l'$,
$0\leq l<k$.
It is easy to verify that $M_{k+1},k\geq0$ are integrable random variables that 
satisfy $\mathbb{E}[M_{k+1}|\mathcal{F}(k)]=0$, $\forall k\geq0$.
Because ${\phi}_k$, $r_k$, and ${\phi}_k'$ have
uniformly bounded second moments, it can be seen that for some constant
$c_1>0$, $\forall k\geq0$,
\begin{equation*}
\mathbb{E}[||M_{k+1}||^2|\mathcal{F}(k)]\leq
c_1(1+||\omega_k||^2+||{u}_k||^2+||{\theta}_k||^2).
\end{equation*}

Now Assumptions (A1) and (A2) of \cite{borkar2000ode} are verified.
Furthermore, Assumptions (TS) of \cite{borkar2000ode} is satisfied by our
conditions on the step-size sequences $\alpha_k$,$\zeta_k$, $\beta_k$. Thus,
by Theorem 2.2 of \cite{borkar2000ode} we obtain that
$||\omega_k-\omega^*||\rightarrow 0$ almost surely as $k\rightarrow \infty$.

Consider now the second time scale recursion (\ref{uvmtdc}).
Based on the above analysis, (\ref{uvmtdc}) can be rewritten as
% \begin{equation*}
%     {u}_{k+1}\leftarrow u_{k}+\zeta_{k}[\delta_{k}-\mathbb{E}[\delta_k|{u}_k,{\theta}_k] - {\phi}^{\top} (s_k) {u}_k]{\phi}(s_k).
% \end{equation*}
\begin{equation}
 \dot{{\theta}}(t) = 0,
    \label{thetavmtdcFaster}
\end{equation}
\begin{equation}
 \dot{u}(t) = \mathbb{E}[(\delta_t-\mathbb{E}[\delta_t|{u}(t),{\theta}(t)]){\phi}_t|{\theta}(t)] - \textbf{C}{u}(t).
    \label{uvmtdcFaster}
\end{equation}
The ODE (\ref{thetavmtdcFaster}) suggests that ${\theta}(t)\equiv {\theta}$ (i.e., a time-invariant parameter)
when viewed from the second fast timescale.
By the Hirsch lemma \cite{hirsch1989convergent}, it follows that
$||{\theta}_k-{\theta}||\rightarrow 0$ a.s. as $k\rightarrow \infty$ for some
${\theta}$ that depends on the initial condition ${\theta}_0$ of recursion
(\ref{thetavmtdc}). 

Consider now the recursion (\ref{uvmtdc}).
Let
$N_{k+1}=((\delta_k-\mathbb{E}[\delta_k]) - {\phi}_k {\phi}^{\top}_k {u}_k) -\mathbb{E}[((\delta_k-\mathbb{E}[\delta_k]) - {\phi}_k {\phi}^{\top}_k {u}_k)|\mathcal{I} (k)]$,
where $\mathcal{I}(k)=\sigma({u}_l,{\theta}_l,l\leq k;{\phi}_s,{\phi}_s',r_s,s<k)$, 
$k\geq 1$ are the sigma fields
generated by ${u}_0,{\theta}_0,{u}_{l+1},{\theta}_{l+1},{\phi}_l,{\phi}_l'$,
$0\leq l<k$.
It is easy to verify that $N_{k+1},k\geq0$ are integrable random variables that 
satisfy $\mathbb{E}[N_{k+1}|\mathcal{I}(k)]=0$, $\forall k\geq0$.
Because ${\phi}_k$, $r_k$, and ${\phi}_k'$ have
uniformly bounded second moments, it can be seen that for some constant
$c_2>0$, $\forall k\geq0$,
\begin{equation*}
\mathbb{E}[||N_{k+1}||^2|\mathcal{I}(k)]\leq
c_2(1+||{u}_k||^2+||{\theta}_k||^2).
\end{equation*}

Because ${\theta}(t)\equiv {\theta}$ from (\ref{thetavmtdcFaster}), the ODE pair (\ref{thetavmtdcFaster})-(\ref{uvmtdcFaster})
can be written as 
\begin{equation}
 \dot{{u}}(t) = \mathbb{E}[(\delta_t-\mathbb{E}[\delta_t|{\theta}]){\phi}_t|{\theta}] - \textbf{C}{u}(t).
    \label{uvmtdcFasterFinal}
\end{equation}
Now consider the function $h({u})=\mathbb{E}[\delta_t-\mathbb{E}[\delta_t|{\theta}]|{\theta}] -\textbf{C}{u}$, i.e., the
driving vector field of the ODE (\ref{uvmtdcFasterFinal}). For (\ref{uvmtdcFasterFinal}),
${u}^* = \textbf{C}^{-1}\mathbb{E}[(\delta-\mathbb{E}[\delta|{\theta}]){\phi}|{\theta}]$ is the unique globally asymptotically
stable equilibrium. Let $h_{\infty}({u})=-\textbf{C}{u}$.
For the ODE
\begin{equation}
 \dot{{u}}(t) = h_{\infty}({u}(t))= -\textbf{C}{u}(t),
    \label{uvmtdcInfty}
\end{equation}
the origin of (\ref{uvmtdcInfty}) is a globally asymptotically stable
equilibrium because $\textbf{C}$ is a positive definite matrix (because it is nonnegative definite and nonsingular).
Now Assumptions (A1) and (A2) of \cite{borkar2000ode} are verified.
Furthermore, Assumptions (TS) of \cite{borkar2000ode} is satisfied by our
conditions on the step-size sequences $\alpha_k$,$\zeta_k$, $\beta_k$. Thus,
by Theorem 2.2 of \cite{borkar2000ode} we obtain that
$||{u}_k-{u}^*||\rightarrow 0$ almost surely as $k\rightarrow \infty$.

Consider now the slower timescale recursion (\ref{thetavmtdc}). In the light of the above,
(\ref{thetavmtdc}) can be rewritten as 
\begin{equation}
 {\theta}_{k+1} \leftarrow {\theta}_{k} + \alpha_k (\delta_k -\mathbb{E}[\delta_k|{\theta}_k]) {\phi}_k\\
 - \alpha_k \gamma{\phi}'_{k}({\phi}^{\top}_k \textbf{C}^{-1}\mathbb{E}[(\delta_k -\mathbb{E}[\delta_k|{\theta}_k]){\phi}|{\theta}_k]).
\end{equation}
Let $\mathcal{G}(k)=\sigma({\theta}_l,l\leq k;{\phi}_s,{\phi}_s',r_s,s<k)$, 
$k\geq 1$ be the sigma fields
generated by ${\theta}_0,{\theta}_{l+1},{\phi}_l,{\phi}_l'$,
$0\leq l<k$. Let
\begin{equation*}
    \begin{array}{ccl}
 Z_{k+1}&=&((\delta_k -\mathbb{E}[\delta_k|{\theta}_k]) {\phi}_k - \gamma {\phi}'_{k}{\phi}^{\top}_k \textbf{C}^{-1}\mathbb{E}[(\delta_k -\mathbb{E}[\delta_k|{\theta}_k]){\phi}|{\theta}_k])\\ 
     & &-\mathbb{E}[((\delta_k -\mathbb{E}[\delta_k|{\theta}_k]) {\phi}_k - \gamma {\phi}'_{k}{\phi}^{\top}_k \textbf{C}^{-1}\mathbb{E}[(\delta_k -\mathbb{E}[\delta_k|{\theta}_k]){\phi}|{\theta}_k])|\mathcal{G}(k)]\\
    &=&((\delta_k -\mathbb{E}[\delta_k|{\theta}_k]) {\phi}_k - \gamma {\phi}'_{k}{\phi}^{\top}_k \textbf{C}^{-1}\mathbb{E}[(\delta_k -\mathbb{E}[\delta_k|{\theta}_k]){\phi}|{\theta}_k])\\
    & &-\mathbb{E}[(\delta_k -\mathbb{E}[\delta_k|{\theta}_k]) {\phi}_k|{\theta}_k] - \gamma\mathbb{E}[{\phi}' {\phi}^{\top}]\textbf{C}^{-1}\mathbb{E}[(\delta_k -\mathbb{E}[\delta_k|{\theta}_k]) {\phi}_k|{\theta}_k].
    \end{array}
\end{equation*}
It is easy to see that $Z_k$, $k\geq 0$ are integrable random variables and $\mathbb{E}[Z_{k+1}|\mathcal{G}(k)]=0$, $\forall k\geq0$. Further,
\begin{equation*}
\mathbb{E}[||Z_{k+1}||^2|\mathcal{G}(k)]\leq
c_3(1+||{\theta}_k||^2), k\geq 0
\end{equation*}
for some constant $c_3 \geq 0$, again because ${\phi}_k$, $r_k$, and ${\phi}_k'$ have
uniformly bounded second moments, it can be seen that for some constant.

Consider now the following ODE associated with (\ref{thetavmtdc}):
\begin{equation}
 \dot{{\theta}}(t) = (\textbf{I} - \mathbb{E}[\gamma {\phi}' {\phi}^{\top}]\textbf{C}^{-1})\mathbb{E}[(\delta -\mathbb{E}[\delta|{\theta}(t)]) {\phi}|{\theta}(t)].
    \label{thetavmtdcSlowerFinal}
\end{equation}
Let 
\begin{equation*}
\begin{array}{ccl}
 \vec{h}({\theta}(t))&=&(\textbf{I} - \mathbb{E}[\gamma {\phi}' {\phi}^{\top}]\textbf{C}^{-1})\mathbb{E}[(\delta -\mathbb{E}[\delta|{\theta}(t)]) {\phi}|{\theta}(t)]\\
    &=&(\textbf{C} - \mathbb{E}[\gamma {\phi}' {\phi}^{\top}])\textbf{C}^{-1}\mathbb{E}[(\delta -\mathbb{E}[\delta|{\theta}(t)]) {\phi}|{\theta}(t)]\\
    &=& (\mathbb{E}[{\phi} {\phi}^{\top}] - \mathbb{E}[\gamma {\phi}' {\phi}^{\top}])\textbf{C}^{-1}\mathbb{E}[(\delta -\mathbb{E}[\delta|{\theta}(t)]) {\phi}|{\theta}(t)]\\
    &=& \textbf{A}^{\top}\textbf{C}^{-1}(-\textbf{A}{\theta}(t)+{b}),
\end{array}
\end{equation*}
because $\mathbb{E}[(\delta -\mathbb{E}[\delta|{\theta}(t)]) {\phi}|{\theta}(t)]=-\textbf{A}{\theta}(t)+{b}$, where 
$\textbf{A} = \mathrm{Cov}({\phi},{\phi}-\gamma{\phi}')$, ${b}=\mathrm{Cov}(r,{\phi})$, and $\textbf{C}=\mathbb{E}[{\phi}{\phi}^{\top}]$

Therefore,
${\theta}^*=\textbf{A}^{-1}{b}$ can be seen to be the unique globally asymptotically
stable equilibrium for ODE (\ref{thetavmtdcSlowerFinal}).
Let $\vec{h}_{\infty}({\theta})=\lim_{r\rightarrow
\infty}\frac{\vec{h}(r{\theta})}{r}$. Then
$\vec{h}_{\infty}({\theta})=-\textbf{A}^{\top}\textbf{C}^{-1}\textbf{A}{\theta}$ is well-defined. 
Consider now the ODE
\begin{equation}
\dot{{\theta}}(t)=-\textbf{A}^{\top}\textbf{C}^{-1}\textbf{A}{\theta}(t).
\label{odethetavmtdcfinal}
\end{equation}

Because $\textbf{C}^{-1}$ is positive definite and $\textbf{A}$ has full rank (as it
is nonsingular by assumption), the matrix $\textbf{A}^{\top} \textbf{C}^{-1}\textbf{A}$ is also
positive definite. 
The ODE (\ref{odethetavmtdcfinal}) has the origin of its unique globally asymptotically stable equilibrium.
Thus, the assumption (A1) and (A2) are verified.

The proof is given above.
In the fastest time scale, the parameter $w$ converges to
$\mathbb{E}[\delta|{u}_k,{\theta}_k]$.
In the second fast time scale,
the parameter $u$ converges to $\textbf{C}^{-1}\mathbb{E}[(\delta-\mathbb{E}[\delta|{\theta}_k]){\phi}|{\theta}_k]$.
In the slower time scale,
the parameter ${\theta}$ converges to $\textbf{A}^{-1}{b}$.
\end{proof}

\subsection{Proof of Theorem 4}
\label{proofVMETD}
\begin{proof}
\label{th4proof}   
The proof of VMETD's convergence is also based on Borkar's Theorem   for
general stochastic approximation recursions with two time scales
\cite{borkar1997stochastic}. 

The  VMTD's solution is
${\theta}_{\text{VMETD}}=\textbf{A}_{\text{VMETD}}^{-1}{b}_{\text{VMETD}}$.
First, note that recursion (\ref{thetavmetd}) can be rewritten as
\begin{equation*}
 {\theta}_{k+1}\leftarrow {\theta}_k+\beta_k{\xi}(k),
\end{equation*}
 where
\begin{equation*}
 {\xi}(k)=\frac{\alpha_k}{\beta_k} (F_k \rho_k\delta_k - \omega_{k+1}){\phi}_k
\end{equation*}
 Due to the settings of step-size schedule $\alpha_k = o(\beta_k)$,
${\xi}(k)\rightarrow 0$ almost surely as $k\rightarrow\infty$. 
 That is the increments in iteration (\ref{omegavmetd}) are uniformly larger than
 those in (\ref{thetavmetd}), thus (\ref{omegavmetd}) is the faster recursion.
 Along the faster time scale, iterations of (\ref{thetavmetd}) and (\ref{omegavmetd})
 are associated with the ODEs system as follows:
\begin{equation}
 \dot{{\theta}}(t) = 0,
\label{vmetdthetaFast}
\end{equation}
\begin{equation}
 \dot{\omega}(t)=\mathbb{E}_{\mu}[F_t\rho_t\delta_t|{\theta}(t)]-\omega(t).
\label{vmetdomegaFast}
\end{equation}
 Based on the ODE (\ref{vmetdthetaFast}), ${\theta}(t)\equiv {\theta}$ when
 viewed from the faster timescale. 
 By the Hirsch lemma \cite{hirsch1989convergent}, it follows that
$||{\theta}_k-{\theta}||\rightarrow 0$ a.s. as $k\rightarrow \infty$ for some
${\theta}$ that depends on the initial condition ${\theta}_0$ of recursion
(\ref{thetavmetd}).
 Thus, the ODE pair (\ref{vmetdthetaFast})-(\ref{vmetdomegaFast}) can be written as
\begin{equation}
 \dot{\omega}(t)=\mathbb{E}_{\mu}[F_t\rho_t\delta_t|{\theta}]-\omega(t).
\label{vmetdomegaFastFinal}
\end{equation}
 Consider the function $h(\omega)=\mathbb{E}_{\mu}[F\rho\delta|{\theta}]-\omega$,
 i.e., the driving vector field of the ODE (\ref{vmetdomegaFastFinal}).
 It is easy to find that the function $h$ is Lipschitz with coefficient
$-1$.
 Let $h_{\infty}(\cdot)$ be the function defined by
 $h_{\infty}(\omega)=\lim_{x\rightarrow \infty}\frac{h(x\omega)}{x}$.
 Then  $h_{\infty}(\omega)= -\omega$,  is well-defined. 
 For (\ref{vmetdomegaFastFinal}), $\omega^*=\mathbb{E}_{\mu}[F\rho\delta|{\theta}]$
 is the unique globally asymptotically stable equilibrium.
 For the ODE
  \begin{equation}
 \dot{\omega}(t) = h_{\infty}(\omega(t))= -\omega(t),
 \label{vmetdomegaInfty}
 \end{equation}
 apply $\vec{V}(\omega)=(-\omega)^{\top}(-\omega)/2$ as its
 associated strict Liapunov function. Then,
 the origin of (\ref{vmetdomegaInfty}) is a globally asymptotically stable
 equilibrium.

 Consider now the recursion (\ref{omegavmetd}).
 Let
$M_{k+1}=(F_k\rho_k\delta_k-\omega_k)
 -\mathbb{E}_{\mu}[(F_k\rho_k\delta_k-\omega_k)|\mathcal{F}(k)]$,
 where $\mathcal{F}(k)=\sigma(\omega_l,{\theta}_l,l\leq k;{\phi}_s,{\phi}_s',r_s,s<k)$, 
$k\geq 1$ are the sigma fields
 generated by $\omega_0,{\theta}_0,\omega_{l+1},{\theta}_{l+1},{\phi}_l,{\phi}_l'$,
$0\leq l<k$.
 It is easy to verify that $M_{k+1},k\geq0$ are integrable random variables that 
 satisfy $\mathbb{E}[M_{k+1}|\mathcal{F}(k)]=0$, $\forall k\geq0$.
 Because ${\phi}_k$, $r_k$, and ${\phi}_k'$ have
 uniformly bounded second moments, it can be seen that for some constant
$c_1>0$, $\forall k\geq0$,
\begin{equation*}
 \mathbb{E}[||M_{k+1}||^2|\mathcal{F}(k)]\leq
 c_1(1+||\omega_k||^2+||{\theta}_k||^2).
\end{equation*}

 Now Assumptions (A1) and (A2) of \cite{borkar2000ode} are verified.
 Furthermore, Assumptions (TS) of \cite{borkar2000ode} are satisfied by our
 conditions on the step-size sequences $\alpha_k$, $\beta_k$. Thus,
 by Theorem 2.2 of \cite{borkar2000ode} we obtain that
$||\omega_k-\omega^*||\rightarrow 0$ almost surely as $k\rightarrow \infty$.

 Consider now the slower time scale recursion (\ref{thetavmetd}).
 Based on the above analysis, (\ref{thetavmetd}) can be rewritten as 

\begin{equation*}
    \begin{split}
 {\theta}_{k+1}&\leftarrow {\theta}_k+\alpha_k (F_k \rho_k\delta_k - \omega_k){\phi}_k -\alpha_k \omega_{k+1}{\phi}_k\\
&={\theta}_{k}+\alpha_k(F_k\rho_k\delta_k-\mathbb{E}_{\mu}[F_k\rho_k\delta_k|{\theta}_k]){\phi}_k\\
    &={\theta}_k+\alpha_k F_k \rho_k (R_{k+1}+\gamma {\theta}_k^{\top}{\phi}_{k+1}-{\theta}_k^{\top}{\phi}_k){\phi}_k -\alpha_k \mathbb{E}_{\mu}[F_k \rho_k \delta_k]{\phi}_k\\
    &= {\theta}_k+\alpha_k \{\underbrace{(F_k\rho_kR_{k+1}-\mathbb{E}_{\mu}[F_k\rho_k R_{k+1}]){\phi}_k}_{{b}_{\text{VMETD},k}}
 -\underbrace{(F_k\rho_k{\phi}_k({\phi}_k-\gamma{\phi}_{k+1})^{\top}-{\phi}_k\mathbb{E}_{\mu}[F_k\rho_k ({\phi}_k-\gamma{\phi}_{k+1})]^{\top})}_{\textbf{A}_{\text{VMETD},k}}{\theta}_k\}
\end{split}
\end{equation*}

 Let $\mathcal{G}(k)=\sigma({\theta}_l,l\leq k;{\phi}_s,{\phi}_s',r_s,s<k)$, 
$k\geq 1$ be the sigma fields
 generated by ${\theta}_0,{\theta}_{l+1},{\phi}_l,{\phi}_l'$,
$0\leq l<k$.
 Let 
$
 Z_{k+1} = Y_{k}-\mathbb{E}[Y_{k}|\mathcal{G}(k)],
$
 where
\begin{equation*}
 Y_{k}=(F_k\rho_k\delta_k-\mathbb{E}_{\mu}[F_k\rho_k\delta_k|{\theta}_k]){\phi}_k.
\end{equation*}
 Consequently,
\begin{equation*}
\begin{array}{ccl}
 \mathbb{E}_{\mu}[Y_k|\mathcal{G}(k)]&=&\mathbb{E}_{\mu}[(F_k\rho_k\delta_k-\mathbb{E}_{\mu}[F_k\rho_k\delta_k|{\theta}_k]){\phi}_k|\mathcal{G}(k)]\\
&=&\mathbb{E}_{\mu}[F_k\rho_k\delta_k{\phi}_k|{\theta}_k]
 -\mathbb{E}_{\mu}[\mathbb{E}_{\mu}[F_k\rho_k\delta_k|{\theta}_k]{\phi}_k]\\
&=&\mathbb{E}_{\mu}[F_k\rho_k\delta_k{\phi}_k|{\theta}_k]
 -\mathbb{E}_{\mu}[F_k\rho_k\delta_k|{\theta}_k]\mathbb{E}_{\mu}[{\phi}_k]\\
&=&\mathrm{Cov}(F_k\rho_k\delta_k|{\theta}_k,{\phi}_k),
\end{array}
\end{equation*}
 where $\mathrm{Cov}(\cdot,\cdot)$ is a covariance operator.

 Thus,
 \begin{equation*}
\begin{array}{ccl}
 Z_{k+1}&=&(F_k\rho_k\delta_k-\mathbb{E}[\delta_k|{\theta}_k]){\phi}_k-\mathrm{Cov}(F_k\rho_k\delta_k|{\theta}_k,{\phi}_k).
\end{array}
\end{equation*}
 It is easy to verify that $Z_{k+1},k\geq0$ are integrable random variables that 
 satisfy $\mathbb{E}[Z_{k+1}|\mathcal{G}(k)]=0$, $\forall k\geq0$.
 Also, because ${\phi}_k$, $r_k$, and ${\phi}_k'$ have
 uniformly bounded second moments, it can be seen that for some constant
$c_2>0$, $\forall k\geq0$,
\begin{equation*}
 \mathbb{E}[||Z_{k+1}||^2|\mathcal{G}(k)]\leq
 c_2(1+||{\theta}_k||^2).
\end{equation*}

 Consider now the following ODE associated with (\ref{thetavmetd}):
\begin{equation}
\begin{array}{ccl}
 \dot{{\theta}}(t)&=&-\textbf{A}_{\text{VMETD}}{\theta}(t)+{b}_{\text{VMETD}}.
\end{array}
\label{odethetavmetd}
\end{equation}
\begin{equation}
    \begin{split}
 \textbf{A}_{\text{VMETD}}&=\lim_{k \rightarrow \infty} \mathbb{E}[\textbf{A}_{\text{VMETD},k}]\\
&= \lim_{k \rightarrow \infty} \mathbb{E}_{\mu}[F_k \rho_k {\phi}_k ({\phi}_k - \gamma {\phi}_{k+1})^{\top}]- \lim_{k\rightarrow \infty} \mathbb{E}_{\mu}[  {\phi}_k]\mathbb{E}_{\mu}[F_k \rho_k ({\phi}_k - \gamma {\phi}_{k+1})]^{\top}\\  
% &= \lim_{k \rightarrow \infty} \mathbb{E}_{\mu}[\underbrace{{\phi}_k}_{X}\underbrace{F_k \rho_k  ({\phi}_k - \gamma {\phi}_{k+1})^{\top}}_{Y}]- \lim_{k\rightarrow \infty} \mathbb{E}_{\mu}[  {\phi}_k]\mathbb{E}_{\mu}[F_k \rho_k ({\phi}_k - \gamma {\phi}_{k+1})]^{\top}\\  
&= \lim_{k \rightarrow \infty} \mathbb{E}_{\mu}[{\phi}_kF_k \rho_k  ({\phi}_k - \gamma {\phi}_{k+1})^{\top}]- \lim_{k\rightarrow \infty} \mathbb{E}_{\mu}[  {\phi}_k]\mathbb{E}_{\mu}[F_k \rho_k ({\phi}_k - \gamma {\phi}_{k+1})]^{\top}\\ 
&= \lim_{k \rightarrow \infty} \mathbb{E}_{\mu}[{\phi}_kF_k \rho_k ({\phi}_k - \gamma {\phi}_{k+1})^{\top}]- \lim_{k \rightarrow \infty} \mathbb{E}_{\mu}[ {\phi}_k]\lim_{k \rightarrow \infty}\mathbb{E}_{\mu}[F_k \rho_k ({\phi}_k - \gamma {\phi}_{k+1})]^{\top}\\   
&=\sum_{s} f(s) {\phi}(s)({\phi}(s) - \gamma \sum_{s'}[\textbf{P}_{\pi}]_{ss'}{\phi}(s'))^{\top} - \sum_{s} d_{\mu}(s) {\phi}(s) * \sum_{s} f(s)({\phi}(s) - \gamma \sum_{s'}[\textbf{P}_{\pi}]_{ss'}{\phi}(s'))^{\top}  \\
&={{\Phi}}^{\top} \textbf{F} (\textbf{I} - \gamma \textbf{P}_{\pi}) {\Phi} - {{\Phi}}^{\top} \textbf{d}_{\mu} \textbf{f}^{\top} (\textbf{I} - \gamma \textbf{P}_{\mu}) {\Phi}  \\
&={{\Phi}}^{\top} (\textbf{F} - \textbf{d}_{\mu} \textbf{f}^{\top}) (\textbf{I} - \gamma \textbf{P}_{\pi}){{\Phi}} \\
&={{\Phi}}^{\top} (\textbf{F} (\textbf{I} - \gamma \textbf{P}_{\pi})-\textbf{d}_{\mu} \textbf{f}^{\top} (\textbf{I} - \gamma \textbf{P}_{\pi})){{\Phi}} \\
&={{\Phi}}^{\top} (\textbf{F} (\textbf{I} - \gamma \textbf{P}_{\pi})-\textbf{d}_{\mu} \textbf{d}_{\mu}^{\top} ){{\Phi}} \\
    \end{split}
\end{equation}
\begin{equation}
    \begin{split}
 {b}_{\text{VMETD}}&=\lim_{k \rightarrow \infty} \mathbb{E}[{b}_{\text{VMETD},k}]\\
&= \lim_{k \rightarrow \infty} \mathbb{E}_{\mu}[F_k\rho_kR_{k+1}{\phi}_k]- \lim_{k\rightarrow \infty} \mathbb{E}_{\mu}[{\phi}_k]\mathbb{E}_{\mu}[F_k\rho_kR_{k+1}]\\  
&= \lim_{k \rightarrow \infty} \mathbb{E}_{\mu}[{\phi}_kF_k\rho_kR_{k+1}]- \lim_{k\rightarrow \infty} \mathbb{E}_{\mu}[  {\phi}_k]\mathbb{E}_{\mu}[{\phi}_k]\mathbb{E}_{\mu}[F_k\rho_kR_{k+1}]\\ 
&= \lim_{k \rightarrow \infty} \mathbb{E}_{\mu}[{\phi}_kF_k\rho_kR_{k+1}]- \lim_{k \rightarrow \infty} \mathbb{E}_{\mu}[ {\phi}_k]\lim_{k \rightarrow \infty}\mathbb{E}_{\mu}[F_k\rho_kR_{k+1}]\\  
&=\sum_{s} f(s) {\phi}(s)r_{\pi} - \sum_{s} d_{\mu}(s) {\phi}(s) * \sum_{s} f(s)r_{\pi}  \\
&={{\Phi}}^{\top}(\textbf{F}-\textbf{d}_{\mu} \textbf{f}^{\top})\textbf{r}_{\pi} \\
    \end{split}
\end{equation}
 Let $\vec{h}({\theta}(t))$ be the driving vector field of the ODE
 (\ref{odethetavmetd}).
\begin{equation*}
 \vec{h}({\theta}(t))=-\textbf{A}_{\text{VMETD}}{\theta}(t)+{b}_{\text{VMETD}}.
\end{equation*}

 An ${\Phi}^{\top}{\text{X}}{\Phi}$ matrix of this
 form will be positive definite whenever the matrix ${\text{X}}$ is positive definite.
 Any matrix ${\text{X}}$ is positive definite if and only if
 the symmetric matrix ${\text{S}}={\text{X}}+{\text{X}}^{\top}$ is positive definite. 
 Any symmetric real matrix ${\text{S}}$ is positive definite if the absolute values of
 its diagonal entries are greater than the sum of the absolute values of the corresponding
 off-diagonal entries\cite{sutton2016emphatic}. 

\begin{equation}
    \label{rowsum}
    \begin{split}
 (\textbf{F} (\textbf{I} - \gamma \textbf{P}_{\pi})-\textbf{d}_{\mu} \textbf{d}_{\mu}^{\top} )\textbf{1}
    &=\textbf{F} (\textbf{I} - \gamma \textbf{P}_{\pi})\textbf{1}-\textbf{d}_{\mu} \textbf{d}_{\mu}^{\top} \textbf{1}\\
    &=\textbf{F}(\textbf{1}-\gamma \textbf{P}_{\pi} \textbf{1})-\textbf{d}_{\mu} \textbf{d}_{\mu}^{\top} \textbf{1}\\
    &=(1-\gamma)\textbf{F}\textbf{1}-\textbf{d}_{\mu} \textbf{d}_{\mu}^{\top} \textbf{1}\\
    &=(1-\gamma)\textbf{f}-\textbf{d}_{\mu} \textbf{d}_{\mu}^{\top} \textbf{1}\\
    &=(1-\gamma)\textbf{f}-\textbf{d}_{\mu} \\
    &=(1-\gamma)(\textbf{I}-\gamma\textbf{P}_{\pi}^{\top})^{-1}\textbf{d}_{\mu}-\textbf{d}_{\mu} \\
    &=(1-\gamma)[(\textbf{I}-\gamma\textbf{P}_{\pi}^{\top})^{-1}-\textbf{I}]\textbf{d}_{\mu} \\
    &=(1-\gamma)[\sum_{t=0}^{\infty}(\gamma\textbf{P}_{\pi}^{\top})^{t}-\textbf{I}]\textbf{d}_{\mu} \\
    &=(1-\gamma)[\sum_{t=1}^{\infty}(\gamma\textbf{P}_{\pi}^{\top})^{t}]\textbf{d}_{\mu} > 0 \\
    \end{split}
    \end{equation}
\begin{equation}
    \label{columnsum}
    \begin{split}
 \textbf{1}^{\top}(\textbf{F} (\textbf{I} - \gamma \textbf{P}_{\pi})-\textbf{d}_{\mu} \textbf{d}_{\mu}^{\top} )
    &=\textbf{1}^{\top}\textbf{F} (\textbf{I} - \gamma \textbf{P}_{\pi})-\textbf{1}^{\top}\textbf{d}_{\mu} \textbf{d}_{\mu}^{\top} \\
    &=\textbf{d}_{\mu}^{\top}-\textbf{1}^{\top}\textbf{d}_{\mu} \textbf{d}_{\mu}^{\top} \\
    &=\textbf{d}_{\mu}^{\top}- \textbf{d}_{\mu}^{\top} \\
    &=0
    \end{split}
\end{equation}
 (\ref{rowsum}) and (\ref{columnsum}) show that the matrix $\textbf{F} (\textbf{I} - \gamma \textbf{P}_{\pi})-\textbf{d}_{\mu} \textbf{d}_{\mu}^{\top}$ of
 diagonal entries are positive and its off-diagonal entries are negative. So each row sum plus the corresponding column sum is positive. 
 So $\textbf{A}_{\text{VMETD}}$ is positive definite.

 Therefore,
${\theta}^*=\textbf{A}_{\text{VMETD}}^{-1}{b}_{\text{VMETD}}$ can be seen to be the unique globally asymptotically
 stable equilibrium for ODE (\ref{odethetavmetd}).
 Let $\vec{h}_{\infty}({\theta})=\lim_{r\rightarrow
\infty}\frac{\vec{h}(r{\theta})}{r}$. Then
$\vec{h}_{\infty}({\theta})=-\textbf{A}_{\text{VMETD}}{\theta}$ is well-defined. 
 Consider now the ODE
\begin{equation}
 \dot{{\theta}}(t)=-\textbf{A}_{\text{VMETD}}{\theta}(t).
\label{odethetavmetdfinal}
\end{equation}
 The ODE (\ref{odethetavmetdfinal}) has the origin of its unique globally asymptotically stable equilibrium.
 Thus, the assumption (A1) and (A2) are verified.
    \end{proof}

\section{Experimental details}
\label{experimentaldetails}
The 2-state version of Baird's off-policy counterexample: All learning rates follow linear learning rate decay.
For TD algorithm, $\frac{\alpha_k}{\omega_k}=4$ and $\alpha_0 = 0.1$.
For TDC algorithm, $\frac{\alpha_k}{\zeta_k}=5$ and $\alpha_0 = 0.1$.
For VMTDC algorithm, $\frac{\alpha_k}{\zeta_k}=5$, $\frac{\alpha_k}{\omega_k}=4$,and $\alpha_0 = 0.1$.
For VMTD algorithm, $\frac{\alpha_k}{\omega_k}=4$ and $\alpha_0 = 0.1$.

The 2-state version of Baird's off-policy counterexample: All learning rates follow linear learning rate decay.
For TD algorithm, $\frac{\alpha_k}{\omega_k}=4$ and $\alpha_0 = 0.1$.
For TDC algorithm, $\frac{\alpha_k}{\zeta_k}=5$ and $\alpha_0 = 0.1$.For ETD algorithm, $\alpha_0 = 0.1$.
For VMTDC algorithm, $\frac{\alpha_k}{\zeta_k}=5$, $\frac{\alpha_k}{\omega_k}=4$,and $\alpha_0 = 0.1$.For VMETD algorithm, $\frac{\alpha_k}{\omega_k}=4$ and $\alpha_0 = 0.1$.
For VMTD algorithm, $\frac{\alpha_k}{\omega_k}=4$ and $\alpha_0 = 0.1$.

For all policy evaluation experiments, each experiment 
is independently run 100 times.

For the four control experiments: The learning rates for each 
algorithm in all experiments are shown in Table \ref{lrofways}.
For all control experiments, each experiment is independently run 50 times.

\begin{table*}[htb]
    \centering
    \caption{Learning rates ($lr$) of four control experiments.}
    \label{lrofways}
    \begin{tabular}{ccccc}
      \toprule
      \multicolumn{1}{c}{Algorithms ($lr$)} & Maze & Cliff walking & Mountain Car & Acrobot \\
      \midrule
      Sarsa($\alpha$) & 0.1 & 0.1 & 0.1 & 0.1 \\
      GQ($\alpha,\zeta$) & 0.1, 0.003 & 0.1, 0.004 & 0.1, 0.01 & 0.1, 0.01 \\
      EQ($\alpha$) & 0.006 & 0.005 & 0.001 & 0.0005 \\
      VMSarsa($\alpha,\beta$) & 0.1, 0.001 & 0.1, 1e-4 & 0.1, 1e-4 & 0.1, 1e-4 \\
      VMGQ($\alpha,\zeta,\beta$) & 0.1, 0.001, 0.001 & 0.1, 0.005, 1e-4 & 0.1, 5e-4, 1e-4 & 0.1, 5e-4, 1e-4 \\
      VMEQ($\alpha,\beta$) & 0.001, 0.0005 & 0.005, 0.0001 & 0.001, 0.0001 & 0.0005, 0.0001 \\
      Q-learning($\alpha$) & 0.1 & 0.1 & 0.1 & 0.1 \\
      VMQ($\alpha,\beta$) & 0.1, 0.001 & 0.1, 1e-4 & 0.1, 1e-4 & 0.1, 1e-4 \\
      \bottomrule
    \end{tabular}
  \end{table*}
\end{document}